\theoremstyle{plain}
\newtheorem{theorem}{Theorem}[section]
\newtheorem{proposition}[theorem]{Proposition}
\newtheorem{corollary}[theorem]{Corollary}
\theoremstyle{definition}
\newtheorem{definition}[theorem]{Definition}
\theoremstyle{remark}
\newcommand{\round}[1]{\num[round-mode=places,round-precision=2]{#1}}
\newcommand{\acc}[2]{\round{#1}\ifthenelse{\equal{#2}{}}{}{\tiny ${\scriptstyle \pm}$\round{#2}}}
\newcommand{\ouralg}{\textsc{Equitable PM}\xspace}
\newcommand{\bacc}[2]{\bf \round{#1}\ifthenelse{\equal{#2}{}}{}{\bf \tiny ${\scriptstyle \pm}$\round{#2}}}
\icmltitlerunning{Building Socially-Equitable Public Models}
\begin{document}

\twocolumn[
\icmltitle{Building Socially-Equitable Public Models}

\begin{icmlauthorlist}
\icmlauthor{Yejia Liu}{yyy}
\icmlauthor{Jianyi Yang}{yyy}
\icmlauthor{Pengfei Li}{yyy}
\icmlauthor{Tongxin Li}{xxx}
\icmlauthor{Shaolei Ren}{yyy}
\end{icmlauthorlist}

\icmlaffiliation{yyy}{University of California, Riverside, United States}
\icmlaffiliation{xxx}{The Chinese University of Hong Kong, Shenzhen, China}

\icmlcorrespondingauthor{Shaolei Ren}{sren@ece.ucr.edu}
\icmlcorrespondingauthor{Yejia Liu}{yliu807@ucr.edu}

\vskip 0.3in
]



\printAffiliationsAndNotice{}  

\begin{abstract}
Public models offer predictions to a variety of downstream tasks
and have played a crucial role in various AI applications, showcasing their proficiency in accurate predictions. However, the exclusive emphasis on prediction accuracy may not align with the diverse end objectives of downstream agents. Recognizing the public model's predictions as a service, we advocate for integrating the objectives of downstream agents into the optimization process. Concretely, to address performance disparities and foster fairness among heterogeneous agents in training, we propose a novel Equitable Objective.  This objective, coupled with a policy gradient algorithm, is crafted to train the public model to produce a more equitable/uniform performance distribution across downstream agents, each with their unique concerns.
Both theoretical analysis and empirical case studies have proven the effectiveness of our method in advancing performance equity across diverse downstream agents utilizing the public model for their decision-making. Codes and datasets are released at \url{https://github.com/Ren-Research/Socially-Equitable-Public-Models}.

\end{abstract}

\section{Introduction}
\begin{figure*}[t]
    \centering
\includegraphics[width=.95\textwidth]{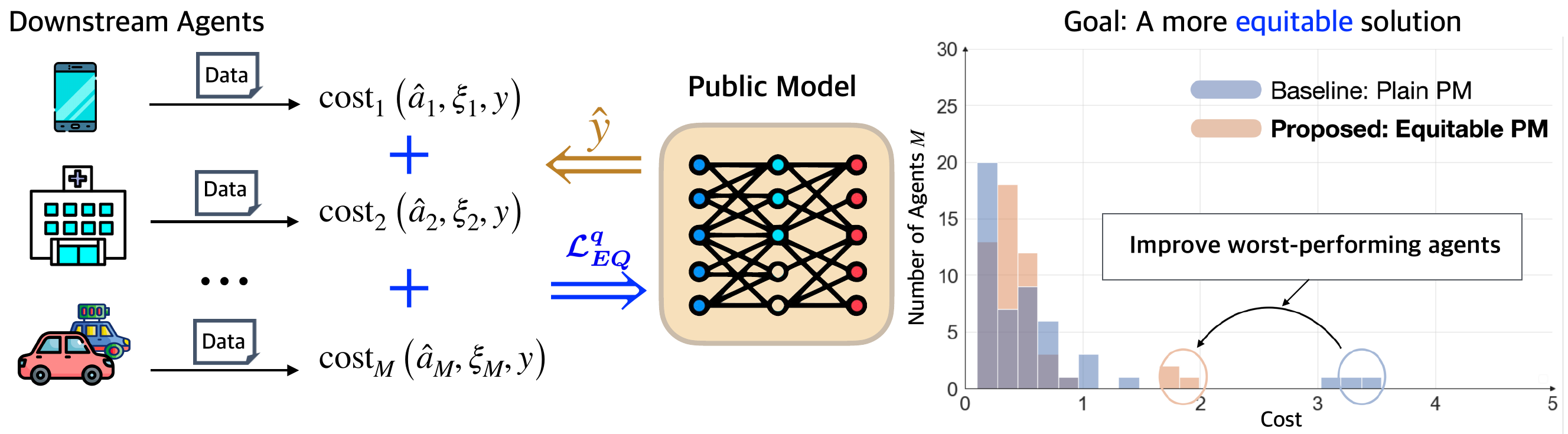}
    \caption{The \ouralg leads to a fairer solution by fostering a more equitable/uniform performance distribution across downstream agents. The embedded Equitable Objective $\mathcal{L}_{EQ}^q$ directly accounts for the decision costs across diverse downstream agents that use the prediction $\hat y$ from the public model $f$ for making informed decisions via agent-specific decision processes and actions $(\hat a_1, ..., \hat a_M)$.
    }
    \label{fig:main}
\end{figure*}

Public models whose outputs are utilized by multiple agents
have become essential building blocks for multiple AI applications such as climate modeling and traffic prediction. These models undergo training on extensive datasets and are tailored for specific domains, making them highly effective in generating accurate predictions~\cite{nguyen2023climax, bommasani2021opportunities, shah2022lmnav}. Their accessibility and availability to the public enable the widespread utilization by diverse downstream agents for individual business goals~\cite{yang2023foundation}.
However, it is important to note that exclusive reliance on prediction accuracy may not be ideal when serving a diverse range of downstream agents, each with unique business objectives. Consider a scenario where a public model predicts disease outbreaks across different regions. While accuracy is pivotal, optimizing the allocation of healthcare resources—ensuring sufficient medical supplies, personnel, and preventive measures—takes precedence, based on the general prediction provided by the public model.

We therefore suggest taking into account the impact of a public model's prediction on downstream agents, rather than solely focusing on minimizing prediction errors during training. A closely related topic is \emph{decision-focused learning}, which involves incorporating domain-specific constraints and/or objectives into the learning algorithm~\cite{game, wilder2020end}. However, the majority of existing decision-focused learning works only address a single task or agent, rendering them barely applicable to the challenge faced by public models, which deal with diverse downstream agents with their varied decision-making objectives~\cite{mandi2023decisionfocused}. Additionally, in the current decision-focused learning framework, performance disparities can arise, with certain agents consistently facing inferior outcomes. For instance, this may happen when some agents have limited training data availability, while others have access to abundant and various datasets.  

We view the prediction provided by a public model as a service for diverse downstream agents. 
As a service provider, prioritizing accuracy is crucial, but ensuring high-quality service for all users, given their diverse concerns, is even more vital. Unfairly benefiting or disadvantaging model performance on specific agents is unjust. While machine learning fairness studies primarily concentrate on achieving accuracy balance among protected groups with sensitive characteristics~\cite{barocas2023fairness, pessach2022review}, we introduce a different fairness perspective centered on ensuring performance equity/uniformity across downstream agents with different decision processes. Recent works have proposed a related concept referred to as the ``good-intent'' fairness, primarily focused on preventing overfitting to any specific device in federated learning~\cite{mohri2019agnostic, li2020fair}. However, its scope is limited to maximizing the performance of the worst-performing devices without accounting
for the decision processes and objectives of diverse downstream agents.

In this work, we propose the Equitable Objective, inspired by the $\alpha$-fairness in resource allocation~\cite{alpha-fair1}, to tackle fairness concerns while considering decision-makings of downstream agents in the development of a public model. The objective minimizes an aggregated reweighted loss, parameterized by $q$, prioritizing the optimization of worse costs—assigning higher relative weights to agents with higher downstream costs when leveraging predictions from a public model. As shown in the motivating example illustrated in Figure~\ref{fig:main}, the proposed approach leads to a more equitable performance among heterogeneous agents compared to the baseline, which solely minimizes prediction errors through MSE loss.

\textbf{Contributions.} We consider a novel setting and propose an Equitable Objective to ensure performance equity/uniformity across downstream agents leveraging a public model for decision-making. We then present an algorithm to optimize the proposed Equitable Objective, which is applicable to both differentiable and non-differentiable downstream cost functions.  Additionally, we provide theoretical results guaranteeing performance equity/uniformity of the proposed approach, along with insights into generalization bounds. Empirically, we demonstrate through case studies using real-world datasets that our approach leads to a more equitable/uniform cost distribution among downstream agents under various settings.

\section{Problem Formulation}
Consider a public model, denoted by $f:\mathsf{X}\times\Theta\rightarrow\mathsf{Y}$ where $\mathsf{X}$ is an input space, $\Theta$ is a set of parameters, and $\mathsf{Y}$ is an output space. The inputs $x\in \mathsf{X}$ are features shared by multiple downstream tasks.   For any $x\in\mathsf{X}$ and $\theta\in\Theta$,
we write $\hat y := f(x;\theta)$ as a prediction from the public model $f$. A significant emphasis in prevalent public model training is on minimizing prediction errors and achieving high accuracy~\cite{bommasani2021opportunities}. However, the loss function used for model training can be easily misaligned with the ultimate goal, which is to optimize decision-making when utilized by diverse downstream agents. We therefore suggest \emph{incorporating downstream agents' costs} into the objective formulation. 

Suppose that there are $M$ heterogeneous downstream agents employing the public model $f$ for decision-making in a stochastic environment. Each agent $m$ possesses a context variable $\xi_m$, which can either represent public shared features like local weather conditions or encapsulate unique features of downstream agents.
By following a policy $\pi_m$, each agent generates an action w.r.t. the input, denoted as $\hat {a}_m (\theta) := \pi_m(\hat y, \xi_m)$.  
The resulting action $\hat {a}_m (\theta)$ taken by the agent $m$ would incur a cost, represented as $\text{cost}_m(\hat {a}_m(\theta),\xi_m, y)$.  

To address the decision cost of downstream agents, a straightforward
approach is to formulate
the objective as 
\begin{align*}
\min_\theta \sum_{m=1}^{M} \mathbb{E} \left[\text{cost}_m (\hat a_m (\theta), \xi_m, y) - \text{cost}_m ( a_m, \xi_m, y))\right],
\end{align*}
where $a_m=\arg\min_{a\in\mathsf{A}} \text{cost}_m ( a, \xi_m, y)$ and $\mathsf{A}$ represents the action space. That is, the objective
is to minimize the total expected \emph{regret} (i.e., the cost
of decisions made based on predicted $\hat y$ minus
the cost of decisions based on the true $y$) for all the $M$ downstream
agents due to the public model's potential prediction errors.\footnote{Our study can be easily generalized to incorporate an additional weight for the expected regret of each agent.}

In an illustrative example where the public model $f$ optimizes traffic signal timings, the standard accuracy goal is to minimize delays, $\min_\theta E[(y-\hat y)^2]$, where $y$ is actual traffic conditions and $\hat y$ is predicted traffic flow. 
In reality, the transportation system involves diverse downstream stakeholders with unique concerns: commuters prioritize travel time and fuel consumption, public services focus on schedule, and environmental regulators are concerned with carbon emissions.  Each party faces decision costs from the actions it takes based on the model's predictions $\hat y$. Thus, using an objective encompassing diverse costs from downstream agents can explicitly incorporate their concerns.

Nevertheless, due to the heterogeneity of agents such as various data biases, solely minimizing the total cost objective can result in significant performance disparities among downstream agents. Consequently, certain agents may consistently experience the poorest performance when using the prediction provided by the public model compared to other agents. For example, the trained model may exhibit a preference towards the agents with greater numbers of data samples. This inequity in performance highlights concerns regarding the fairness of the services these agents receive when viewing the prediction from the public model as a shared ``resource'' serving diverse downstream agents.

\section{Fair Public Model for Downstream Agents}

To achieve fairness for the downstream agents with different decision processes, we propose  \ouralg, which seeks to optimize a novel Equitable Objective.   

\subsection{Defining Fairness: An Equitable Objective }
We now introduce the Equitable Objective to address fairness concerns in the context of diverse downstream costs across different agents. 
By drawing inspiration from the $\alpha$-fairness resource allocation~\cite{alpha-fair1, alpha-fair2, li2020fair}, we propose an objective to promote performance equity/uniformity parameterized by $q\geq0$. Both theoretical proofs (Section~\ref{s:uniTheory}) and empirical case studies (Section~\ref{s:exps}) have shown that the use of the Equitable Objective leads to a more equitable/uniform performance distribution across downstream agents. 

The Equitable Objective aims to minimize the aggregated cost incurred by downstream agents, parameterized by $q$, when utilizing the prediction from the public model $f$, as shown in Eq.~(\ref{eq:obj-nob}),
\begin{equation}
\label{eq:obj-nob}
\begin{split}
\min_\theta  \mathcal{J}_{EQ}^q (\theta)\coloneqq 
\sum_{m=1}^M \mathbb{E}^{q+1} &\left[\text{cost}_m (\hat a_m (\theta), \xi_m, y)\right. \\
&\left.- \text{cost}_m (a_m, \xi_m, y)\right], 
\end{split}
\end{equation}
where the hyperparameter $q\geq0$ promotes performance equity among different agents.
Specifically, when $q$ is set larger, the minimization process will take into account the agent of worst performance to a greater extent.

To train a public model, we need to empirically approximate \eqref{eq:obj-nob} with training data samples.  Let $\mathcal{D}_m=\{x_{m,i},y_{m,i},\xi_{m,i} | i\in [N_m]\}$ be the dataset of the agent $m$, where $N_m$ is the number of data examples in the agent $m$. Note that 
the public model's input features may still vary among different agents (e.g., a public carbon-intensity prediction model uses location-specific features to predict the local grid's carbon intensity).
Thus, for two different agents $m_1$ and $m_2$,  the public variables $\{x_{m_1,i},y_{m_1,i}\}$ and $\{x_{m_2,i},y_{m_2,i}\}$ can be identical or different depending on factors such as whether they are collected at the same time and/or location.
Given the datasets $\mathcal{D}_1,\cdots,\mathcal{D}_M$, we can approximate the expectation $\mathcal{J}_{EQ}^q(\theta)$ in Eq.~(\ref{eq:obj-nob}) with the empirical loss $\mathcal{L}_{EQ}^q(\theta)$ defined in Eq.~(\ref{eq:obj-mid-nob}),
\begin{equation}\label{eq:obj-mid-nob}
\begin{split}
\min_\theta \mathcal{L}_{EQ}^q (\theta)\!\!:=& \sum_{m=1}^M \biggl [\Bigr(\frac{1}{N_m} \sum_{i=1}^{N_m} C_{m, i}\Bigr)^{q+1}\biggl],
\end{split}
\end{equation}
where we denote $C_{m, i} = \text{cost}_m (\hat a_{m,i}(\theta), \xi_{m, i}, y_{m,i}) -\text{cost}_m ( a_{m,i}, \xi_{m, i}, y_{m,i})$ 
as the regret regarding the $i$th  sample of agent $m$.

In practice, the public model developer may not always
have direct access to the costs of all the downstream agents for training. In
such cases, it can generate synthetic downstream agents by
modeling their decision processes based
on, e.g., utility maximization or cost minimization, for the target application. Additionally, annotation-sample efficient methods
like task programming~\cite{suntaskp} can also help model the downstream
decision processes.

It is worth noting that we have also proposed a more general objective in Appendix~\ref{app:withBeta}, which combines $\mathcal{L}_{EQ}^q(\theta)$ with the public model's prediction loss $\mathcal{L}_f$ via a balancing hyperparameter $\beta\in[0,1]$, allowing for a more nuanced control over the fairness-accuracy trade-off in optimization. In the subsequent text, we denote $\mathcal{L}_{EQ}^q (\theta)$ as $\mathcal{L}^q(\theta)$ for simplicity. 

Our proposed $\mathcal{L}^q(\theta)$ ensures that the public model's predictions consider the diverse concerns of downstream agents. 
The trade-offs introduced by adjusting  $q$ contribute to a fairer distribution of performance across agents, fostering an equitable decision-making environment. In the subsequent sections, we provide details and algorithms to train a public model with the Equitable Objective.

\subsection{Training Public Model: \ouralg}
\label{s:Solver}
The difficulties in training a public model vary depending on the cost functions. When the cost functions are differentiable, it is feasible to calculate the gradient based on the chain rule. 
By back propagation, we can get the gradient as
\begin{align*}
\begin{split}
 \nabla_\theta \mathcal{L}^q(\theta) \!=  \!\!\sum_{m=1}^M \sum_{i=1}^{N_m} \nabla_{C_{m, i}} \mathcal{L}^q \!\nabla_{\hat a_{m, i}} \!C_{m, i} \nabla_{\hat y_{m,i}} \hat a_{m, i} \nabla_{\theta} \hat{y}_{m,i},
\end{split}
\end{align*}
where we denote the regret of the $i$th sample of agent $m$ as $C_{m, i} =  \text{cost}_m (\hat a_{m,i}, \xi_{m, i}, y_{m,i}) -\text{cost}_m ( a_{m,i}, \xi_{m, i}, y_{m,i}).$

\begin{algorithm}[t!]
   \caption{\ouralg}
   \label{alg:solver}
\begin{algorithmic}
   \STATE {\bfseries Input:} Training dataset, learning rate $\alpha$
   \STATE Initialize the parameters $\theta$
   \FOR{each batch $k\in [K]$}
    \STATE Obtain $\hat y_{m,k,i}$ by the public model $f(\cdot;\theta)$
   \STATE Compute the cost regret $C_{m,k,i}$ for the example ($x_{m,k,i}$, $y_{m, k,i}$, $\xi_{m, k,i}$) in batch $k$ for $m \in [1, ..., M] $
    \STATE Compute the gradient $\nabla_\theta \mathcal{L}_k^q(\theta)$ for batch $k$ by Eq.~(\ref{eq:grad-nob-empirical}). 
   \STATE Update the parameter $\theta \leftarrow \theta - \alpha \nabla_\theta \mathcal{L}_k^q(\theta)$
   \ENDFOR
\end{algorithmic}
\end{algorithm}

Nonetheless, the training becomes significantly more challenging when the cost function is non-differentiable w.r.t. the $\hat y$. The non-differentiable cost function is prevalent for many practical downstream tasks. For example, some downstream tasks are combinatorial optimization problems with discrete actions \cite{wilder2019melding}. Thus, a training method that does not rely on differentiable cost functions is critically needed for public models.

One possible method is to learn a differentiable model to approximate the cost function by observing the evolution of the sequence of actions and costs~\cite{moerland2023model, yu2020mopo}. However, this method suffers from potentially inaccurate modeling of dynamic environments ~\cite{agarwal2023vo, pmlr-v89-malik19a}. Therefore, we can adopt a model-free approach, such as black-box optimization, which requires fewer assumptions about the underlying system~\cite{agarwal2023vo}. In our context, we choose the policy gradient (PG) algorithm, falling into the category of model-free approaches, in favor of its natural exploration-exploitation trade-off~\cite{bhandari2024global, peters2006policy}. We next present the process of using PG to optimize $\mathcal{L}^q (\theta)$.

PG for training a public model differs notably from the standard PG algorithm. Due to the non-separable Equitable Objective in Eq.~\eqref{eq:obj-mid-nob}, the supervision loss hinges on the average regret of each agent $m$. Thus, we use a batch-based training approach. At each training step, we employ a probabilistic public model  $\sigma_{\theta}(\hat{y}\mid x)$ to sample $\hat{y}_{m,i}$ given inputs $x_{m,i}, i\in [1,\cdots, B_m]$ from a batch of $B_m$ samples. By this way, we obtain an equitable loss expressed as $\sum_{m=1}^{M}  \Bigr(\frac{1}{B_m} \sum_{i=1}^{B_m} C_{m, i}\Bigr)^{q+1}$. To utilize the equitable batch loss for supervising the training of the public model, we reformulate the original objective as 
\[
\begin{split}
     \mathbb{E}[\mathcal{L}^q(\theta)] = \mathbb{E}_{(X,Y,\hat{Y},\Xi) \sim p_\theta } \biggl[\sum_{m=1}^{M}  \Bigr(\frac{1}{B_m} \sum_{i=1}^{B_m}  C_{m, i}\Bigr)^{q+1}  \biggl], 
 \end{split}
 \]
where $p_\theta$ is the joint distribution of the random variables $X=[x_{m,i}\mid m\in[1,\cdots,M], i\in [1,\cdots, B_m]]$, $Y=[y_{m,i}\mid m\in[1,\cdots,M], i\in [1,\cdots, B_m]]$, $\hat{Y}=[\hat{y}_{m,i}\mid m\in[1,\cdots,M], i\in [1,\cdots, B_m]]$, and $\Xi=[\xi_{m,i}\mid m\in[1,\cdots,M], i\in [1,\cdots, B_m]]$, which relies on the probabilistic public model  $\sigma_{\theta}(\hat{y}\mid x)$.
The gradient of $\mathbb{E}[\mathcal{L}^q(\theta)]$ with respect to $\theta$ is given by
\begin{align}
\begin{split}
     \nabla_\theta \mathbb{E}[\mathcal{L}^q(\theta)] \!= \!  &\mathbb{E}_{(X,Y, \hat{Y}, \Xi) \sim p_\theta } \!\Bigr\{\biggl[\sum_{m=1}^{M} \!\! \;\sum_{i=1}^{B_m}\!\!\nabla_\theta\!\log \sigma_\theta (\hat y_{m,i}| x_{m,i})\biggl] \\ &
    \cdot \biggl[   \sum_{m=1}^{M}  \Bigr(\frac{1}{B_m} \sum_{i=1}^{B_m} C_{m, i}\Bigr)^{q+1}
  \biggl] \Bigr\}, \label{eq:grad-nob}
\end{split}
\end{align}
whose detailed derivation 
can be found in Appendix~\ref{s:app-grad}. 

Given a training dataset with $K$ batches, we can get an empirical approximation of the expected gradient in Eq.~\eqref{eq:grad-nob} as follows
\begin{align}
\begin{split}
     \nabla_\theta \mathcal{L}^q(\theta) \!= \!  &\frac{1}{K}\sum_{k=1}^K \nabla_\theta \mathcal{L}_k^q(\theta),\label{eq:grad-nob-empirical}
\end{split}
\end{align}
where $\nabla_\theta \mathcal{L}_k^q(\theta)=\biggl[\sum_{m=1}^{M} \!\!\; \sum_{i=1}^{B_m} \!\!\nabla_\theta\!\log \sigma_\theta (\hat y_{m,k,i}| x_{m,k,i})\biggl]
   \\\cdot\biggl[   \sum_{m=1}^{M}  \Bigr(\frac{1}{B_m} \sum_{i=1}^{B_m} C_{m, k,i}\Bigr)^{q+1}
  \biggl]$.

In summary, the training steps using PG to minimize $\mathcal{L}_\theta^q$ in can be outlined as in the Algorithm~\ref{alg:solver}. During inference, the public model is its deterministic counterpart expressed as $f(x; \theta) := \arg \max_{\hat{y}} \sigma_\theta (\hat{y}|x)$. In subsequent texts, we refer to our proposed method as the \ouralg.

\vspace{-3mm}
\section{Theoretical Analysis}
\label{s:uniTheory}
\subsection{Performance Equity/Uniformity}
In this section, we provide the theoretical justification that the proposed Equitable Objective can promote greater equity/uniformity in the performance distribution across downstream tasks with proofs in Appendix~\ref{app:proofs}. We use $C_m=\frac{1}{N_m}\sum_{i=1}^{N_m}C_{m,i}$ to denote the performance of the $m$-th downstream agent. 
We here adopt \textit{variance} and \textit{entropy} to measure the uniformity of the performance distribution across downstream tasks.
\begin{definition} 
\textit{(Equity by Variance)}
\label{def1}
The performance distribution of $M$ downstream agents $\{C_1(\theta), ..., C_M(\theta)\}$ is more equitable/uniform under solution $\theta$ than $\theta'$ if 
\begin{equation}
 {\mathsf{Var}}\bigr(C_1(\theta), ..., C_M(\theta)\bigr) < \mathsf{Var}\bigr(C_1(\theta'), ..., C_M(\theta')\bigr),
\end{equation}
\end{definition}
where $\mathsf{Var}$ represents the \textit{variance} of performance.

\begin{definition}
\textit{(Equity by Entropy)}
\label{def2}
The performance distribution of $M$ downstream agents $\{C_1(\theta), ..., C_M(\theta)\}$ is more equitable/uniform under solution $\theta$ than $\theta'$ if the entropy of the normalized performance distribution satisfies
\begin{equation}
\mathbb{H}_\text{norm}\bigr(C(\theta)\bigr) \geq \mathbb{H}_\text{norm}\bigr(C(\theta')\bigr),
\end{equation}
where $\mathbb{H}_\text{norm}\bigr(C(\theta)\bigr)$ is expressed as
\begin{equation}
\hspace{-8pt}
    -\sum_{m=1}^{M} \frac{C_m(\theta)}{\sum_{m=1}^M C_m{(\theta)}} \log \left(\frac{C_m(\theta)}{\sum_{m=1}^M C_m{(\theta)}} \right).
\end{equation}
\end{definition}

Definition \ref{def1} and \ref{def2} are also considered in in \cite{li2020fair} and offer metric definitions for evaluating performance equity among agents. Specifically, higher variance or a lower $\mathbb{H}_\text{norm}\bigr(C(\theta)\bigr)$ indicates larger variability (\textit{i.e.,} less equity) in the performance across agents.

We next provide theorems showing that the Equitable Objective Eq.~\eqref{eq:obj-nob} can encourage a more fair solution according to Definition~\ref{def1} and~\ref{def2}. We initiate the analysis with the special case of $q=1$, and prove that $q=1$ can lead to a more equitable performance distribution than $q=0$. The notation $\theta^*_q$ denotes the global optimal solution of $\min_\theta \mathcal{L}^q(\theta)$.
\begin{theorem}
\label{proposition:beta-0-q-1}
When $q=1$, the optimum of Equitable Objective is more equitable compared to $q=0$, indicated by smaller variance of the model performance distribution, \textit{i.e.}\, $\mathsf{Var}(C_1(\theta^*_{q=1}), ..., C_M(\theta^*_{q=1})) < \mathsf{Var}(C_1(\theta^*_{q=0}), ..., C_M(\theta^*_{q=0}))$.
\end{theorem}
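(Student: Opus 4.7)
My plan is to reduce the theorem to a direct algebraic identity linking the objective values $\mathcal{L}^0$ and $\mathcal{L}^1$ to the variance of the per-agent regrets. Since each per-agent average regret satisfies $C_m(\theta)=\frac{1}{N_m}\sum_i C_{m,i}(\theta)\ge 0$ by the definition of $C_{m,i}$ as a regret against the oracle action, we have the clean identity
\begin{equation*}
\mathsf{Var}\bigl(C_1(\theta),\ldots,C_M(\theta)\bigr) \;=\; \frac{1}{M}\,\mathcal{L}^1(\theta) \;-\; \frac{1}{M^2}\bigl(\mathcal{L}^0(\theta)\bigr)^2,
\end{equation*}
because $\mathcal{L}^0(\theta)=\sum_m C_m(\theta)$ and $\mathcal{L}^1(\theta)=\sum_m C_m(\theta)^2$. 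This turns the whole statement into a comparison of two quantities at $\theta^*_0$ and $\theta^*_1$ that are each individually constrained by a single optimality inequality.

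The next step uses the two optimality conditions side by side. By definition of the minimizers, $\mathcal{L}^1(\theta^*_1)\le \mathcal{L}^1(\theta^*_0)$ and $\mathcal{L}^0(\theta^*_0)\le \mathcal{L}^0(\theta^*_1)$; non-negativity of $\mathcal{L}^0$ then lets me square the second inequality to obtain $\bigl(\mathcal{L}^0(\theta^*_0)\bigr)^2\le \bigl(\mathcal{L}^0(\theta^*_1)\bigr)^2$. Substituting both bounds into the variance identity yields
\begin{equation*}
M\,\mathsf{Var}(\theta^*_1) = \mathcal{L}^1(\theta^*_1) - \tfrac{1}{M}\bigl(\mathcal{L}^0(\theta^*_1)\bigr)^2 \;\le\; \mathcal{L}^1(\theta^*_0) - \tfrac{1}{M}\bigl(\mathcal{L}^0(\theta^*_0)\bigr)^2 = M\,\mathsf{Var}(\theta^*_0),
\end{equation*}
which already gives the weak form of the claim.

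The hard part will be upgrading this to the strict inequality asserted in the theorem. Equality above forces \emph{both} $\mathcal{L}^1(\theta^*_0)=\mathcal{L}^1(\theta^*_1)$ and $\mathcal{L}^0(\theta^*_0)=\mathcal{L}^0(\theta^*_1)$, which means that $\theta^*_0$ also minimizes $\mathcal{L}^1$ and $\theta^*_1$ also minimizes $\mathcal{L}^0$—the degenerate situation in which a single parameter is optimal for every $q$ at once. I would dispose of it by invoking a mild non-degeneracy hypothesis on the achievable regret profiles (e.g.\ that the $M$-tuples $(C_1(\theta),\ldots,C_M(\theta))$ cannot all be driven jointly to a common value or proportional to the all-ones vector at the $q=0$ optimum), under which at least one of the two optimality inequalities above is strict and the chain of inequalities becomes strict as well. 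The rest of the argument is purely algebraic and exploits no structural property of the policies $\pi_m$, of the cost functions, or of the PG training procedure, so I expect no further technical obstacle beyond making this non-degeneracy assumption explicit.
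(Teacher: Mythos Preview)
Your argument is correct and is essentially identical to the paper's proof: both exploit the variance decomposition $\mathsf{Var}=\tfrac{1}{M}\sum_m C_m^2-\bigl(\tfrac{1}{M}\sum_m C_m\bigr)^2$ and then bound each term separately via the optimality of $\theta^*_{q=1}$ for the sum of squares and of $\theta^*_{q=0}$ for the sum. Your observation about strictness is in fact sharper than the paper, whose displayed chain of inequalities (Eq.~\eqref{eq:lemma1} in the appendix) also only yields $\leq$, so the strict ``$<$'' in the theorem statement is not actually established there either.
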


Moving forward to the general case, we show that for any $q > 0$, the proposed Equitable Objective can achieve better uniformity in performance distribution given a small increase of $q$.

\begin{theorem}
\label{proposition:2}
Let $C(\theta)$ be twice differentiable in $\theta$ with $\nabla^2 C(\theta) > 0$ (positive definite), for any $M \in \mathbb{N}$, the derivative of $\mathbb{H}_\text{norm}\bigr(C^{q+1}(\theta^*_p)\bigr)$ w.r.t. the evaluation point $p$ is non-negative, \textit{i.e.},
\begin{equation}
\label{eq:p=q}
    \frac{\partial \mathbb{H}_\text{norm}\bigr(C^{q+1}(\theta^*_p)\bigr)}{\partial p}|_{p=q} \geq 0.
\end{equation}
\end{theorem}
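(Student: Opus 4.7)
The plan is to differentiate the normalized entropy with respect to $p$ via the chain rule through $\theta^*_p$, invoke the first-order optimality condition to collapse the expression, and recognize what remains as a positive semidefinite quadratic form.

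First I would introduce the shorthand $u_j(p) := C_j^{q+1}(\theta^*_p)$ and $S(p) := \sum_j u_j(p)$, and compute once and for all the partial derivative of the normalized entropy with respect to $u_j$. A direct calculation on $\mathbb{H}_\text{norm}(u) = -\sum_m \frac{u_m}{S}\log\frac{u_m}{S}$ yields $\partial \mathbb{H}_\text{norm}/\partial u_j = (\mathbb{H}_\text{norm} + \log S - \log u_j)/S$. Chaining through $\theta^*_p$ and setting $\mathbf{v} := d\theta^*_p/dp$ gives
\begin{equation*}
\frac{d \mathbb{H}_\text{norm}}{dp} = \frac{q+1}{S}\Bigl[(\mathbb{H}_\text{norm}+\log S)\sum_j C_j^q \nabla C_j^\top \mathbf{v} - (q+1)\sum_j (\log C_j)\, C_j^q \nabla C_j^\top \mathbf{v}\Bigr].
\end{equation*}

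Next I would evaluate at $p=q$ and use the first-order condition for $\theta^*_q$, namely $\nabla_\theta \mathcal{L}^q(\theta^*_q) = (q+1)\sum_j C_j^q \nabla C_j = 0$. This makes the first bracketed sum vanish, leaving
\begin{equation*}
\left.\frac{d\mathbb{H}_\text{norm}}{dp}\right|_{p=q} = -\frac{(q+1)^2}{S}\, A^\top \mathbf{v}, \qquad A := \sum_j C_j^q (\log C_j) \nabla C_j.
\end{equation*}
To identify $\mathbf{v}$, I apply the implicit function theorem to the FOC $G(p,\theta^*_p) := \sum_j C_j^p(\theta^*_p)\nabla C_j(\theta^*_p) = 0$; differentiating in $p$ and evaluating at $p=q$ gives $A + B\mathbf{v} = 0$, where $B := q\sum_j C_j^{q-1}\nabla C_j \nabla C_j^\top + \sum_j C_j^q \nabla^2 C_j$. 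Since $\nabla^2 \mathcal{L}^q = (q+1)B$ and $\mathcal{L}^q$ is strictly convex under the assumption $\nabla^2 C \succ 0$ (with $C_j>0$ and $q\ge 0$), $B$ is positive definite and hence invertible, yielding $\mathbf{v} = -B^{-1}A$.

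Substituting back gives $(d\mathbb{H}_\text{norm}/dp)|_{p=q} = \frac{(q+1)^2}{S}\, A^\top B^{-1} A \geq 0$, which is exactly the claim. I expect the main obstacle to be the careful justification of the implicit function theorem step: one needs $\theta^*_p$ to be a differentiable selection of minimizers near $p=q$, which requires strict convexity of $\mathcal{L}^p$ (inherited from $\nabla^2 C \succ 0$ and positivity of costs) so that $B$ is invertible and the selection is locally unique. A secondary subtlety is handling the boundary case $q=0$, where the $C_j^{q-1}$ factor in $B$ behaves like $C_j^{-1}$; this is harmless since the coefficient $q$ in front vanishes, but it should be noted. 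Once these regularity points are in place, the remaining manipulations are purely algebraic chain-rule computations.
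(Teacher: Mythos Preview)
Your proposal is correct and follows essentially the same route as the paper's proof: differentiate $\mathbb{H}_\text{norm}$ through $\theta^*_p$, use the first-order optimality condition $\sum_m C_m^q\nabla C_m=0$ to kill the constant-in-$j$ part, apply the implicit function theorem to the FOC to express $d\theta^*_p/dp$ as $-B^{-1}A$, and recognize the result as the positive semidefinite quadratic form $A^\top B^{-1}A$ (up to positive scalars). One harmless slip: in your formula for $\partial \mathbb{H}_\text{norm}/\partial u_j$ the sign of the $\mathbb{H}_\text{norm}$ term should be $-\mathbb{H}_\text{norm}$ rather than $+\mathbb{H}_\text{norm}$, but since that piece multiplies $\sum_j C_j^q\nabla C_j^\top\mathbf{v}$, which vanishes by the FOC, the final expression is unaffected.
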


Theorem~\ref{proposition:2} establishes that a positive partial derivative of $\mathbb{H}_\text{norm}\bigr(C^{q+1}(\theta^*_p)\bigr)$ signifies that a small increase in $p$ is associated with a greater degree of performance uniformity in the learning outcome~\cite{Beirami2019}. 

\subsection{Generalization Bounds}
\label{sGB}
Denote $h$ as the hypothesis function of the public model, i.e. $h(x)=f(x,\theta)$. 
In this work, we prove that the proposed Equitable Objective in Eq. (\ref{eq:obj-mid-nob}) enables the public model to generalize well on the equitable loss described in Eq.~\eqref{eq:equi-loss-bounds}~\cite{mohri2019agnostic}.
\begin{equation}
\label{eq:equi-loss-bounds}
    \mathcal{J}_{\kappa} (h) = \sum_{m=1}^{M} \kappa_m \mathbb{E}_{(x, y) \sim D_m} C_m(h(x), y),
\end{equation}
where $\kappa=[\kappa_1,\cdots, \kappa_M]$ lies in a probability simplex $\Delta$. 

To show the generalization bound, we first give an equivalence of the Equitable Objective in Eq. (\ref{eq:obj-mid-nob}). Given the definition of dual norm, we have
\begin{equation}
\begin{split}
   & \Tilde{\mathcal{L}}^q(h) = (\mathcal{L}^q(h))^{\frac{1}{q+1}}\\
    =&\max_{v, ||v||_p\leq 1} \sum_{m=1}^{M} \Bigr(\frac{v_j}{N_m} \sum_{i=1}^{N_m} C_m(h(x_{m,i}), y_{m,i})\Bigr),
    \label{eq:obj-kappa}
    \end{split}
\end{equation}
where $\frac{1}{p} + \frac{1}{q+1}=1$ $(p \geq 1, q \geq 0)$. Thus, the proposed Equitable Objective in Eq. (\ref{eq:obj-mid-nob}) is equivalent to minimizing the empirical loss $\Tilde{\mathcal{L}}^q(h)$ in Eq. \eqref{eq:obj-kappa}.
We present the generalization bound for $\mathcal{J}_{\kappa}(h)$ which depends on $\Tilde{\mathcal{L}}^q(h)$ as below. 

\begin{proposition}
\label{proposition:5}
Assume that the cost functions $cost_m$ are bounded by $B$. Then for any $\delta > 0$, with probability at least $1-\delta$, the following holds for any $\kappa$ in a probability simplex $\Delta$, and any $h \in H$:
\begin{align}\label{eq:gen2}
\begin{split}
    \mathcal{J}_\kappa(h)  \leq &\max_{ \kappa \in \Delta}(||\kappa||_p)\Tilde{\mathcal{L}}^q(h)
 + \max_{ \kappa \in \Delta} \Bigr(\mathbb{E}[\max_{h \in H} \mathcal{J}_\kappa(h) \\ & -  \mathcal{L}_\kappa(h)]\Bigr)
    + B(\sqrt{\sum_{m} \frac{\kappa_m^2}{2N_m}\log\frac{1}{\delta}})\Bigr),
\end{split}
\end{align}
where $\frac{1}{p} + \frac{1}{q+1}=1$, $\Tilde{\mathcal{L}}^q(h)$ is the equivalent Equitable Objective in Eq.~\eqref{eq:obj-kappa}, and
$
    \mathcal{L}_{\kappa} (h) = \sum_{m=1}^{M} \frac{\kappa_m}{N_m} \sum_{i=1}^{N_m} C_m(h(x_{m, i}), y_{m, i})
$ is the empirical loss of $\mathcal{J}_\kappa(h)$.
\end{proposition}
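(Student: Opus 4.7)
The plan is to split $\mathcal{J}_\kappa(h)$ into an empirical portion plus a uniform generalization gap, then control each piece separately. The first step is to write
\[\mathcal{J}_\kappa(h) \;\leq\; \mathcal{L}_\kappa(h) \;+\; \sup_{h' \in H}\bigl(\mathcal{J}_\kappa(h') - \mathcal{L}_\kappa(h')\bigr),\]
which holds for every $h \in H$. The first summand is a purely empirical quantity that I would handle with Hölder's inequality, and the second is a uniform deviation that I would attack with a bounded-differences concentration inequality.

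For the empirical piece, let $\bar{C}_m(h) := \frac{1}{N_m}\sum_{i=1}^{N_m} C_m(h(x_{m,i}), y_{m,i})$, so that $\mathcal{L}_\kappa(h) = \sum_{m=1}^M \kappa_m \bar{C}_m(h)$ is a bilinear pairing between $\kappa$ and the vector of per-agent empirical costs. Applying Hölder's inequality with dual exponents $p$ and $q+1$ (so that $1/p + 1/(q+1) = 1$) gives
\[\mathcal{L}_\kappa(h) \;\leq\; \|\kappa\|_p \biggl(\sum_{m=1}^M \bar{C}_m(h)^{q+1}\biggr)^{\!1/(q+1)} \;=\; \|\kappa\|_p \cdot \tilde{\mathcal{L}}^q(h),\]
where the last equality is exactly the equivalence in Eq.~\eqref{eq:obj-kappa}. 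Upper-bounding $\|\kappa\|_p$ by $\max_{\kappa \in \Delta} \|\kappa\|_p$ reproduces the first term of the claimed bound.

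For the uniform deviation, I would apply McDiarmid's inequality to $\Phi(S) := \sup_{h \in H}\bigl(\mathcal{J}_\kappa(h) - \mathcal{L}_\kappa(h)\bigr)$, where $S = (S_1,\ldots,S_M)$ denotes the full collection of samples. Because each $C_m$ is bounded by $B$, replacing a single point in agent $m$'s dataset changes $\mathcal{L}_\kappa(h)$—and therefore $\Phi(S)$—by at most $B\kappa_m / N_m$. The bounded-differences inequality then yields, with probability at least $1-\delta$,
\[\Phi(S) \;\leq\; \mathbb{E}[\Phi(S)] \;+\; B\sqrt{\sum_{m=1}^M \frac{\kappa_m^2}{2 N_m}\log\frac{1}{\delta}},\]
and upper-bounding $\mathbb{E}[\Phi(S)]$ by $\max_{\kappa \in \Delta} \mathbb{E}[\max_{h \in H}(\mathcal{J}_\kappa(h) - \mathcal{L}_\kappa(h))]$ yields the remaining two terms. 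Chaining this with the Hölder bound closes the argument.

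The part I expect to be most delicate is making the bound simultaneously uniform over $h \in H$ and $\kappa \in \Delta$ without paying an additional union-bound cost. The way the statement is phrased—maximizing $\|\kappa\|_p$ and the expected supremum over $\Delta$ while leaving the subgaussian tail in $\kappa$—suggests the strategy is to convert every $\kappa$-dependent piece into a $\kappa$-free upper bound \emph{before} invoking concentration, so that McDiarmid need only be applied once at the specific $\kappa$ appearing on the left-hand side. I would therefore verify that the Hölder step and the outer $\max_\kappa$ already absorb the $\kappa$-dependence in the first two terms, leaving the bounded-differences inequality to supply only the final $B\sqrt{\sum_m \kappa_m^2/(2N_m)\log(1/\delta)}$ tail.
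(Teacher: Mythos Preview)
Your proposal is correct and follows essentially the same route as the paper's proof: fix $\kappa$, invoke the uniform deviation bound $\mathcal{J}_\kappa(h) \le \mathcal{L}_\kappa(h) + \mathbb{E}[\max_{h\in H}(\mathcal{J}_\kappa(h)-\mathcal{L}_\kappa(h))] + B\sqrt{\sum_m \kappa_m^2/(2N_m)\log(1/\delta)}$, apply H\"older to bound $\mathcal{L}_\kappa(h)\le \|\kappa\|_p\,\tilde{\mathcal{L}}^q(h)$, then weaken the first two terms by taking $\max_{\kappa\in\Delta}$. The only difference is that the paper simply cites \citet{mohri2019agnostic} for the concentration step, whereas you spell out the McDiarmid argument that underlies it; your added remark about the delicacy of simultaneous uniformity in $\kappa$ is well taken and in fact more careful than the paper, which just passes to the $\max_{\kappa}$ without further comment.
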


\section{Empirical Case Studies}
\label{s:exps}
We evaluate the effectiveness of \ouralg in fostering a more equitable solution for downstream heterogeneous agents, each with their own business objective, while utilizing the prediction from an upstream public model. Our empirical study encompasses the applications of data centers and Electric Vehicles (EV) charging.

\paragraph{Evaluation Metrics} Instead of solely prioritizing the prediction accuracy, we emphasize the outcome, \textit{e.g.}, decision cost (or rewards), of downstream agents from using the prediction of the upstream public model. Moreover, for diverse agents, we believe the algorithm should promote an equitable/uniform distribution of performance rather than disproportionately affecting specific agents. Our evaluation therefore incorporates three key metrics to assess the uniformity of performance distribution across agents:
1) \textit{Variance} of the cost regret; 2) \textit{Mean} of the cost regret; and 3) \textit{$ {C_{95}-C_{5}}$ percentile}, the discrepancy between the $95\%$ and $5\%$ percentiles of the cost regret across agents.

\subsection{Application I: Carbon Efficiency in Data Centers}
\label{sAPPI}

\textbf{Setup}\quad
Data centers are responsible for a significant amount of energy consumption and carbon emissions. In order to reduce their carbon footprint, it is crucial to manage energy consumption and optimize the allocation of workloads~\cite{radovanovic2022carbon,patterson2022carbon}.

In empirical studies, we denote the workload demand of data center $j$ at time step $t$ as $w_{j, t}$, represent the allocated computational resource as $p_{j,t}$, and indicate the predicted carbon emission rate at time $t$ by $c_t$, where $c_t$ is estimated by a public model. The processing delay can be calculated as $\frac{w_{j,t}}{p_{j,t} - w_{j,t}}$. Our objective is to minimize the combined impact of carbon emissions, $p_{j,t} c_t$, and processing latency, 
$\frac{w_{j, t}}{p_{j,t} - w_{j,t}}$, by determining the optimal allocation of computational resource $p_t$, as shown in Eq. (\ref{eq:carbon}),
\begin{align}
\begin{split}
   \min_{p_{j,t}}  \quad
   p_{j,t} c_{t} + \lambda_j \frac{w_{j,t}}{p_{j,t} - w_{j,t}},
\label{eq:carbon}
\end{split}
\end{align}

where $\lambda_j$ adjusts the relative significance of carbon emissions and processing latency for different data centers.

\paragraph{Datasets}
Our experiments mainly use the publicly available state-level energy fuel mix dataset~\cite{EIA} and the Azure cloud workload dataset~\cite{shahrad2020serverless}. The fuel mix dataset provides information on various energy sources utilized in electricity generation (e.g. coal, natural gas, and oil) while the Azure cloud workload dataset captures the energy consumption/demand patterns of the cloud center across different time periods.
Besides, we utilize the carbon conversion rates provided in~\cite{green} to calculate the carbon emissions associated with different types of fuel used for energy generation. More details are in Appendix~\ref{s:app-data}.

\paragraph{Implementation Details} We set the number of downstream agents as $50$. We set up 3 different settings by varying data distribution and the values of $\lambda$ among agents. The $50$ agents have Wasserstein distance of $w_j$ ranges falling within $[0.03, 0.58]$ and they are labeled as ``similar agents''. At the same time, we randomly select $20$ agents from the set and introduce random noise, resulting the total $50$ agents with Wasserstein distance w.r.t. $w_j$ spanning $[0.04, 57.97]$, which are labeled as ``different agents''. Additionally, regarding the values of $\lambda$, ``same $\lambda$'' designates $\lambda=2$, whereas ``different $\lambda$'' spans $\lambda=\{2, 4, ..., 100\}$ among agents. Given the time-series nature of the datasets, we train and employ an LSTM network as the shared public model. More details are provided in Appendix~\ref{app:impl-d}.

\begin{table*}[t]
    \small
    \setlength\intextsep{10pt}
    \setlength\lineskip{10pt}
    \captionsetup{width=\linewidth}
     \caption{Statistics of the test results under different setups. As $q$ increases, the variance and $C_{95} - C_{5}$ percentile of cost regret distribution across agents decrease, suggesting a more uniform distribution of costs across groups. The \ouralg also achieves lower means in costs regrets across agents compared to the Plain PM in general.
     }
    \centering
    \resizebox{0.9\textwidth}{!}{
    \setlength{\tabcolsep}{16pt}
    \begin{tabular}{cclcccc}
    \toprule
 
  Setting & Method &  $q+1$ &  \textbf{Variance} & \textbf{Mean} & $\bm{C_{95} - C_{5}}$ & MSE \\
    \toprule
  
    \multirow{4}{*}{\begin{tabular}{@{}c@{}}Similar Agents, \\ Different $\lambda$\end{tabular}} &  
     \multirow{3}{*}{\textbf{\ouralg}} & 
     $1$ &  0.0029  & 0.1591 & 0.1687 & 4.6308\\
    &  & $1.1$ & 0.0003 & 0.0544 & 0.0576 & 4.2465\\
    &  & $1.5$ & \textbf{0.0002}  & \textbf{0.0465} & \textbf{0.0493} & 4.2194\\
    \cmidrule(lr{1em}){2-7}
    & Plain PM & -  & 0.0085 & 0.2732 & 0.2897 & \textbf{4.2054}\\
    \midrule
  
    \multirow{4}{*}{\begin{tabular}{@{}c@{}}Different Agents, \\ Same $\lambda$\end{tabular}} & 
     \multirow{3}{*}{\textbf{\ouralg}} & 
    $1$&  0.0008 & 0.0909 & 0.0809 & 5.0991\\
    &  & $3$ & 0.0001 & 0.0345 & 0.0306 & 4.4338\\
    &  & $20$ &   \textbf{1.71e-5} & \textbf{0.0136} & \textbf{0.0121} & 4.2028\\
    \cmidrule(lr{1em}){2-7}
    & Plain PM & -  &  0.0009 & 0.0988 & 0.0879 & \textbf{4.2013}\\
    \midrule
   
    \multirow{4}{*}{\begin{tabular}{@{}c@{}}Different Agents, \\ Different $\lambda$\end{tabular}} &  
     \multirow{3}{*}{\textbf{\ouralg}} & 
    $1$&   0.0181 & 0.2619 & 0.4229 & 4.6607 \\
    &  & $3$ &  0.0068 & 0.1603 & 0.2588 & 4.4182\\
    &  & $10$ &  \textbf{0.0055} & \textbf{0.1444} & \textbf{0.2331} & 4.3819\\
    \cmidrule(lr{1em}){2-7}
    & Plain PM & -  & 0.0602 & 0.4779 & 0.7717 & \textbf{4.2013}\\
     \bottomrule
    \end{tabular}
    }
    \label{tab:main-res}

\end{table*}
\paragraph{Results} In Table~\ref{tab:main-res}, we present the performance comparison between \ouralg and the traditional public model that does not consider the decision-making process of downstream agents, referred to as the Plain PM. From the Table~\ref{tab:main-res}, we can observe that the values of cost regret variance and $C_{95}-C_{5}$ achieved by \ouralg are smaller than the Plain PM. The variance and percentile measure values of \ouralg decrease as the value of $q$ increases, suggesting more uniform cost regret distributions, and therefore a fairer solution.
Also, the \ouralg has resulted in an improved cost regret mean compared to the Plain PM.
Although the \ouralg does not achieve the minimum MSE on predicting carbon emissions $c_t$, it delivers more equitable and accurate cost outcomes for heterogeneous downstream agents under various settings.

Figure~\ref{fig:app1-qs} shows the distribution of cost regret 
with various $q$ values under different setups. When the data distribution among agents remains similar but with varying $\lambda$ values, an increase in the value of $q$ leads to a distribution with lower variance, as observed in Figure~\ref{fig:app1-qs} (a). In Figure~\ref{fig:app1-qs} (b), (c) and (d), we observe that the cost regret distributions become less dispersed when $q$ increases, indicating a more equitable solution for different agents.

\begin{figure*}[h]
    \centering
\includegraphics[width=\textwidth]{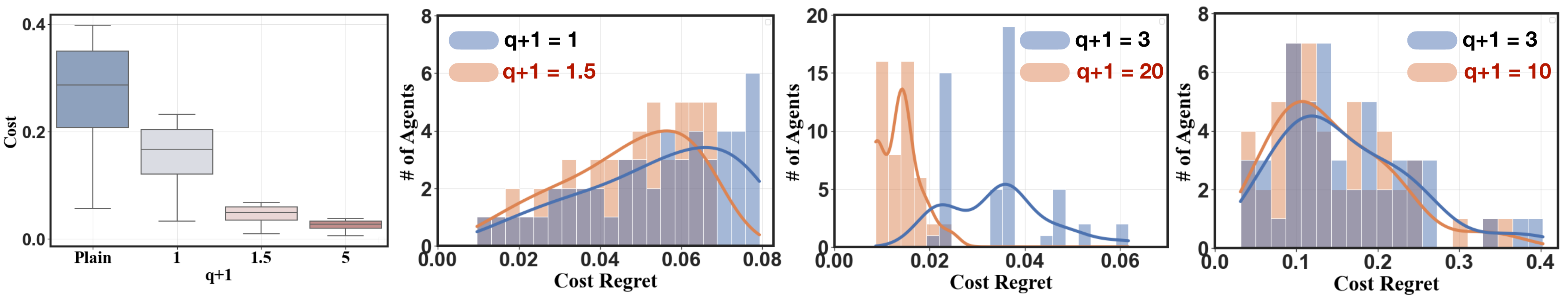}
    \caption{(a) Comparison of cost regret distributions between Plain PM vs. \ouralg on ``similar agents'' with different $\lambda$. The \ouralg shows lower variability in cost distribution compared to the Plain PM. With varied $q+1$, we show cost regret distributions when using \ouralg in (b) ``similar agents'' with different $\lambda$; (c)  ``different agents'' with same $\lambda$; (d) ``different agents'' with different $\lambda$. As the value of q increases,
the cost regret distribution across downstream agents achieves greater uniformity, implying a more equitable solution.}
    \label{fig:app1-qs}
    \vspace{-2mm}
\end{figure*}
\vspace{2mm}
\subsection{Application II: Scheduled EV Charging for Environmental Sustainability}
\label{sAPPII}
\paragraph{Setup} The increasing popularity of EV raises concerns about their environmental impact. To address this, scheduling EV charging can play a pivotal role in enhancing both environmental sustainability and the stability of the power system~\cite{filote2020environmental}. Here, we evaluate the potential of \ouralg for a more equitable solution, in the context of optimizing the EV charging schedule aiming at minimizing the financial cost, together with carbon emission and water consumption.

Consider an EV $j$ with an initial electrical charge state, denoted as $I_j$, which requires attaining an electric charge level represented as $D_j$. This charging process occurs within a defined time window that begins at ${s}_j$ and concludes at ${e}_j$. For optimization purpose, we discretize the time window $[s_j, e_j]$ into time slots $\tau=\{1, ..., T\}$ and utilize a binary charging schedule defined as $X_j$. In the schedule, each element $x_{j,t}$ is either $1$, indicating that we charge the vehicle at the time $t$, or $0$ if we don't (e.g., $X_j=[1, 0, ... 1]$, with a total of $|\tau|$ elements in $X_j$). And the amount of electricity charged at each time step $t$ of the $j$-th EV is $\zeta_{j, t}$. 

At each time step $t$ within $[s_j, e_j]$, an upstream public model predicts the combined carbon, water efficiency and electricity price, expressed as $E_{t} =  E^C_{t} +\gamma E^W_{t} + \eta E^P_{t}$, where $E^C_{t}$ and $E^W_{t}$ denote the carbon and water efficiency at time $t$, respectively, while $E^P_{t}$ represents the electricity price at time $t$ for downstream agent EV. The term $\gamma$ and $\eta$ represents their relative weight of these factors. Here, the efficiency refers to the amount of carbon emission or water consumption per unit of electricity generated.

The objective is to reduce the total cost, which includes carbon emissions, water consumption, and the financial cost of electricity incurred throughout the charging process, by determining the optimal charging schedule for the $j$-th EV. We formulate the objective in Eq.~(\ref{eq:cwp}).
\begin{align}
    \begin{split}
        \min_{X_{j}}
        &   \sum_t \zeta_{j, t} x_{j,t} \cdot E_t
        \\
        s.t. 
        \quad & 
       I_j + \sum_t \zeta_{j,t} x_{j,t} = D_j \\
        & E_{t} =  E^C_{t} +\gamma E^W_{t} + \eta E^P_{t},
    \label{eq:cwp}
    \end{split}
\end{align}
where $X_j=[x_{j,1},\cdots, x_{j,T}]$.

\textbf{Datasets} \quad Our main sources of datasets include the publicly available ACN-Data, collected from the Caltech ACN and similar websites~\cite{acndata}, as well as the California Electricity Market (CAISO)~\cite{CAISO}. The ACN-Data records the real time charging details, including EV arrival/departure times and actual energy delivered in each charging session.
Simultaneously, the CAISO provides data on electricity prices in California. We use the ACN-Data to estimate power demand and charging rates for EV in residential areas, considering that EV models are similar between residential and other charging stations~\cite{sche}. Additionally, we use the state-level energy fuel mix data~\cite{EIA} for carbon and water efficiency calculation. Regarding the available charging time window, from $s_j$ to $e_j$, in residential sectors, we use the data from The National Household Travel Survey (NHTS) to approximate~\cite{NHTS, sche}. Further details can be founded in Appendix~\ref{s:app-data}.

\begin{figure}[t]
    \centering
    \includegraphics[width=\columnwidth]{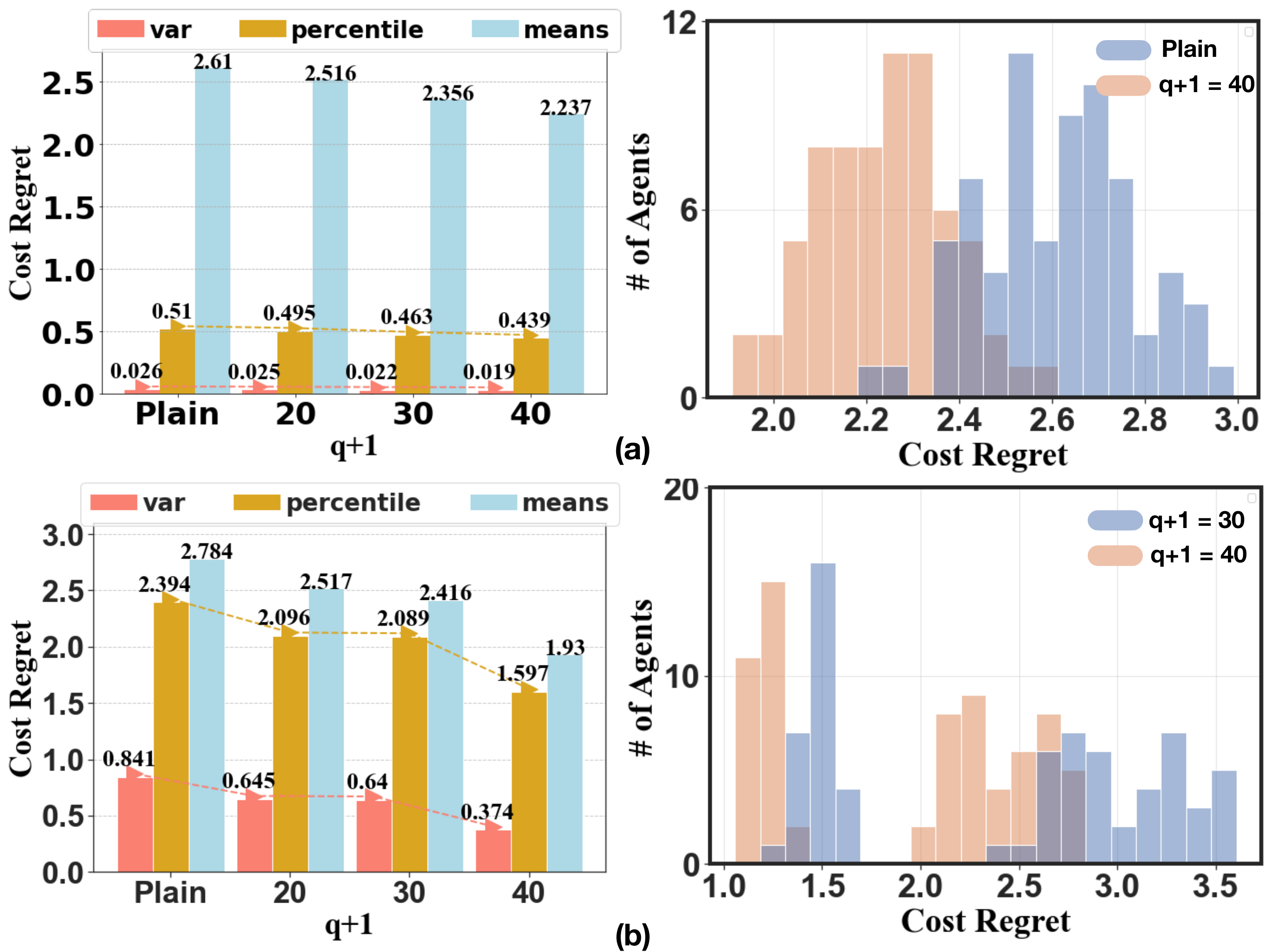}
    \caption{Statistics and cost regret distributions of test result between Plain PM and \ouralg with varied $q+1$ for (a) ``similar''; and (b) ``different'' agents.  The \ouralg demonstrates improved uniformity in agent distributions compared to the Plain PM.
    As $q$ increases, the uniformity of cost distribution across agents improves. }
    \vspace{-10pt}
    \label{fig:app2-nob}
\end{figure}

\textbf{Implementation Details} \quad In the experiments, we follow the calculation of carbon and water efficiency outlined in~\cite{li2023making}. We recognize that different EV exhibit distinct charging patterns~\cite{ev-charging-curve}. Given our central objective of ensuring fairness across a diverse range of EV, we here however opt for a simplifying assumption of a uniform charging rate, implying that $\zeta_{j, t}$ remain constant w.r.t. $t$ for the $j$-th EV~\cite{ev-charging-curve}. More specifically, this rate is calculated by the charged electricity divided by the difference between the ending charging and starting times of the $j$-th EV, as provided in the ACN-Data. Additionally, we use the energy demands of each EV provided in the ACN-Data as $D_j$. To ensure flexibility in charging scheduling, we set the time frame $|\tau|=12$. For instance, if we use an hourly unit, this corresponds to scheduling charging for half of the day. 
We set the number of downstream EV agents as $70$ and $\gamma$ and $\eta$ are set at $1$. In the experiments, we explore the effectiveness of \ouralg across agents exhibiting varied data distributions.  The Wasserstein distance range of $(D_j - I_j)$ for agents labeled as ``similar agents'' spans $[0.80, 9.00]$ , while for ``different agents'', it ranges $[1.33, 60.36]$. The \emph{Transformer} architecture, with a linear layer as the task head, is employed as the shared public model to predict $E_t$ for scheduling downstream EV charging. More details are provided in Appendix~\ref{app:impl-d}.

\textbf{Results}\quad  Figure~\ref{fig:app2-nob} reports the evaluation results between the Plain PM and \ouralg with different $q$ for agents exhibiting both similar and different distributions. The results demonstrate that the \ouralg consistently achieves lower variance and $C_{95}-C_{5}$ percentile values compared to utilizing the Plain PM in both settings.  Examining Figure~\ref{fig:app2-nob}, it becomes evident that as the value of $q$ increases, both the variance and the range $C_{95}-C_{5}$ percentile of cost regret distributions among agents decrease, indicating a trend towards a more uniformly distributed performance.

\section{Related Works}
\textbf{Fairness in Machine Learning} \quad
Fairness is a prevalent topic within the realm of machine learning, often focusing on the protection of certain groups or attributes. The problem stems partly from inherent biases within datasets and could be further magnified by models~\cite{fairnSur1, li2023fairer}.
Various approaches have been developed to mitigate this form of unfairness, spanning different stages of model development. These approaches encompass pre-processing methods, such as excluding sensitive attributes from the datasets to prevent model reliance on these factors~\cite{Biswas2021, madras2018learning}. Post-processing techniques calibrate prediction outcomes after training~\cite{pessach2022review, noriegacampero2018active}, and in-processing methodologies directly integrates fairness considerations during model training~\cite{fairnSur1, kearns2018preventing}.

Our work enforces fairness during training but takes a distinct perspective. We emphasize the equity/uniformity of performance distribution across heterogeneous agents, as we view the upstream public model as a shared resource serving diverse downstream agents. While certain studies advocate for equivalent error rates as a fairness criterion~\cite{48135}, our goal does not prioritize optimizing equal model accuracy across all agents. Drawing an analogy between the shared public model and a resource, we are inspired by a unified resource allocation framework called $\alpha$-fairness, where the service provider can adjust fairness emphasis via a single hyperparameter~\cite{879343, lan2009axiomatic}. However, the aspect of equity, specifically concerning the impact of predictions from a shared public model on diverse downstream agents' business decisions, a focal point in our work, remains unexplored in previous literatures.

\textbf{Decision-focused Learning} \quad
Decision-focused learning is an emerging area in machine learning that trains a model to optimize decisions by integrating prediction and optimization within an end-to-end system~\cite{mandi2023decisionfocused}. It diverges from the predict-then-optimize framework~\cite{balghiti2022generalization, elmachtoub2020smart}, where a ML model is trained initially to map observed features to relevant parameters of a combinatorial optimization problem, followed by using a specialized optimization algorithm to solve the decision problem based on predicted parameters. The predict-then-optimize methodology assumes  accurate predictions generate precise models, enabling optimal decisions. However, ML models often lack perfect accuracy, prediction errors thus can lead to suboptimal decisions.

In comparison, decision-focused learning directly trains the ML model to make predictions that lead to good decisions, where the optimization is embedded as a component of the ML model, creating an end-to-end approach. Recent studies have utilized supervised or reinforcement learning to optimize ultimate decisions with end-to-end machine learning~\cite{wilder2020end, game, bello2017neural, donti2019taskbased}. This holistic approach has enhanced the model's capability to drive informed and effective downstream decisions. However, few existing works have considered the issue of performance disparity across diverse business agents, each with their distinct concerns, specifically in the context of using a publicly shared model to optimize their decisions~\cite{yang2023foundation, learntodefer, learnh}.

\section{Conclusion}
In this paper, we introduce the novel Equitable Objective and its corresponding solver, the \ouralg with either differentiable or non-differentiable cost functions, to promote the performance equity/uniformity among diverse downstream agents that depend on the predictions of a shared public model for their decision-making. Alongside theoretical proofs demonstrating the performance uniformity improvement achieved by our proposed approach, the empirical case studies using real-world datasets further validates that \ouralg can attain a more equitable solution compared to methods that solely focuses on minimizing the prediction error without considering the objectives of downstream agents in different settings. 

\textbf{Limitation \& Future Works} \quad Our current method relies on accessing the decision costs from downstream groups to construct a socially-responsible public model, potentially raising privacy and security concerns. In future research, we aim to investigate ways that uphold privacy and increase robustness against adversarial attacks (e.g., maliciously reporting decision costs) when extending our approach. 
Furthermore, while the models used in the current case studies are appropriate for the present context, their scale is relatively modest, also due to a constraint imposed by our limited computing resources. We would like to explore the efficacy of our proposed method in more extensive architectures and other domains such as healthcare.
Additionally, the exploration of alternative methods, such as using fine-tuning to align public foundation models for making business-informed decisions and addressing fairness concerns accordingly, continues to be a key focus for upcoming research endeavors.

\section*{Acknowledgements}
We would like to first thank the anonymous reviewers for their insightful comments.

Yejia Liu, Jianyi Yang, Pengfei Li, and Shaolei Ren were supported in part by the US NSF under grants CNS-1910208, CNS-2007115, and CCF-2324941.

Tongxin Li was supported in part by the National Natural Science Foundation of China (NSFC) under grant No. 72301234, Pengcheng Peacock Research Fund (Category C), the Guangdong Key Lab of Mathematical Foundations for AI (2023B1212010001), the Shenzhen Key Lab of Crowd Intelligence Empowered Low-Carbon Energy Network, and the start-up funding UDF01002773 of CUHK-Shenzhen.
\section*{Impact Statement}
This paper presents work whose goal is to advance the field of Machine Learning to make public AI mores more equitable when serving multiple agents each having distinct downstream decision processes and objectives. There are many potential societal consequences of our work. Notably,
our work can lead to more uniform decision costs among multiple agents sharing a single pubic model.

\bibliography{ref}

\begin{thebibliography}{50}
\providecommand{\natexlab}[1]{#1}
\providecommand{\url}[1]{\texttt{#1}}
\expandafter\ifx\csname urlstyle\endcsname\relax
  \providecommand{\doi}[1]{doi: #1}\else
  \providecommand{\doi}{doi: \begingroup \urlstyle{rm}\Url}\fi

\bibitem[Agarwal et~al.(2023)Agarwal, Jin, and Zhang]{agarwal2023vo}
Agarwal, A., Jin, Y., and Zhang, T.
\newblock Vo $ q $ l: Towards optimal regret in model-free rl with nonlinear function approximation.
\newblock In \emph{The Thirty Sixth Annual Conference on Learning Theory}, pp.\  987--1063. PMLR, 2023.

\bibitem[Altman et~al.(2008)Altman, Avrachenkov, and Garnaev]{alpha-fair1}
Altman, E., Avrachenkov, K., and Garnaev, A.
\newblock Generalized $\alpha$-fair resource allocation in wireless networks.
\newblock In \emph{2008 47th IEEE Conference on Decision and Control}, pp.\  2414--2419, 2008.
\newblock \doi{10.1109/CDC.2008.4738709}.

\bibitem[Balghiti et~al.(2022)Balghiti, Elmachtoub, Grigas, and Tewari]{balghiti2022generalization}
Balghiti, O.~E., Elmachtoub, A.~N., Grigas, P., and Tewari, A.
\newblock Generalization bounds in the predict-then-optimize framework, 2022.

\bibitem[Barocas et~al.(2023)Barocas, Hardt, and Narayanan]{barocas2023fairness}
Barocas, S., Hardt, M., and Narayanan, A.
\newblock \emph{Fairness and machine learning: Limitations and opportunities}.
\newblock MIT Press, 2023.

\bibitem[Beirami et~al.(2019)Beirami, Calderbank, Christiansen, Duffy, and Medard]{Beirami2019}
Beirami, A., Calderbank, R., Christiansen, M.~M., Duffy, K.~R., and Medard, M.
\newblock A characterization of guesswork on swiftly tilting curves.
\newblock \emph{IEEE Transactions on Information Theory}, 65\penalty0 (5):\penalty0 2850–2871, May 2019.
\newblock ISSN 1557-9654.
\newblock \doi{10.1109/tit.2018.2879477}.
\newblock URL \url{http://dx.doi.org/10.1109/TIT.2018.2879477}.

\bibitem[Bello et~al.(2017)Bello, Pham, Le, Norouzi, and Bengio]{bello2017neural}
Bello, I., Pham, H., Le, Q.~V., Norouzi, M., and Bengio, S.
\newblock Neural combinatorial optimization with reinforcement learning, 2017.

\bibitem[Bhandari \& Russo(2024)Bhandari and Russo]{bhandari2024global}
Bhandari, J. and Russo, D.
\newblock Global optimality guarantees for policy gradient methods.
\newblock \emph{Operations Research}, 2024.

\bibitem[Biswas \& Rajan(2021)Biswas and Rajan]{Biswas2021}
Biswas, S. and Rajan, H.
\newblock Fair preprocessing: towards understanding compositional fairness of data transformers in machine learning pipeline.
\newblock In \emph{Proceedings of the 29th ACM Joint Meeting on European Software Engineering Conference and Symposium on the Foundations of Software Engineering}, ESEC/FSE ’21. ACM, August 2021.
\newblock \doi{10.1145/3468264.3468536}.
\newblock URL \url{http://dx.doi.org/10.1145/3468264.3468536}.

\bibitem[Bommasani et~al.(2021)Bommasani, Hudson, Adeli, Altman, Arora, von Arx, Bernstein, Bohg, Bosselut, Brunskill, et~al.]{bommasani2021opportunities}
Bommasani, R., Hudson, D.~A., Adeli, E., Altman, R., Arora, S., von Arx, S., Bernstein, M.~S., Bohg, J., Bosselut, A., Brunskill, E., et~al.
\newblock On the opportunities and risks of foundation models.
\newblock \emph{arXiv preprint arXiv:2108.07258}, 2021.

\bibitem[CAISO()]{CAISO}
CAISO.
\newblock California iso - market price maps.
\newblock URL \url{http://www.caiso.com/pricemap/Pages/default.aspx}.

\bibitem[Cotter et~al.(2019)Cotter, Jiang, Wang, Narayan, You, Sridharan, and Gupta]{48135}
Cotter, A., Jiang, H., Wang, S., Narayan, T., You, S., Sridharan, K., and Gupta, M.~R.
\newblock Optimization with non-differentiable constraints with applications to fairness, recall, churn, and other goals.
\newblock \emph{Journal of Machine Learning Research}, 2019.

\bibitem[Donti et~al.(2019)Donti, Amos, and Kolter]{donti2019taskbased}
Donti, P.~L., Amos, B., and Kolter, J.~Z.
\newblock Task-based end-to-end model learning in stochastic optimization, 2019.

\bibitem[Elmachtoub \& Grigas(2020)Elmachtoub and Grigas]{elmachtoub2020smart}
Elmachtoub, A.~N. and Grigas, P.
\newblock Smart "predict, then optimize", 2020.

\bibitem[Filote et~al.(2020)Filote, Felseghi, Raboaca, and A{\c{s}}chilean]{filote2020environmental}
Filote, C., Felseghi, R.-A., Raboaca, M.~S., and A{\c{s}}chilean, I.
\newblock Environmental impact assessment of green energy systems for power supply of electric vehicle charging station.
\newblock \emph{International Journal of Energy Research}, 44\penalty0 (13):\penalty0 10471--10494, 2020.

\bibitem[Gao et~al.(2012)Gao, Curtis, Wong, and Keshav]{green}
Gao, P.~X., Curtis, A.~R., Wong, B., and Keshav, S.
\newblock It's not easy being green.
\newblock In \emph{Proceedings of the ACM SIGCOMM 2012 Conference on Applications, Technologies, Architectures, and Protocols for Computer Communication}, SIGCOMM '12, pp.\  211–222, New York, NY, USA, 2012. Association for Computing Machinery.
\newblock ISBN 9781450314190.
\newblock \doi{10.1145/2342356.2342398}.
\newblock URL \url{https://doi.org/10.1145/2342356.2342398}.

\bibitem[Jang \& Yang(2022)Jang and Yang]{alpha-fair2}
Jang, J. and Yang, H.~J.
\newblock $\alpha$-fairness-maximizing user association in energy-constrained small cell networks.
\newblock \emph{IEEE Transactions on Wireless Communications}, 21\penalty0 (9):\penalty0 7443--7459, 2022.
\newblock \doi{10.1109/TWC.2022.3158694}.

\bibitem[Johnson-Yu et~al.(2023)Johnson-Yu, Wang, Finocchiaro, Taneja, and Tambe]{game}
Johnson-Yu, S., Wang, K., Finocchiaro, J., Taneja, A., and Tambe, M.
\newblock \emph{Modeling Robustness in Decision-Focused Learning as a Stackelberg Game}, pp.\  2908–2909.
\newblock International Foundation for Autonomous Agents and Multiagent Systems, Richland, SC, 2023.
\newblock ISBN 9781450394321.

\bibitem[Kearns et~al.(2018)Kearns, Neel, Roth, and Wu]{kearns2018preventing}
Kearns, M., Neel, S., Roth, A., and Wu, Z.~S.
\newblock Preventing fairness gerrymandering: Auditing and learning for subgroup fairness, 2018.

\bibitem[Lan et~al.(2009)Lan, Kao, Chiang, and Sabharwal]{lan2009axiomatic}
Lan, T., Kao, D., Chiang, M., and Sabharwal, A.
\newblock An axiomatic theory of fairness in network resource allocation, 2009.

\bibitem[Lee et~al.(2019)Lee, Li, and Low]{acndata}
Lee, Z., Li, T., and Low, S.
\newblock Acn-data: Analysis and applications of an open ev charging dataset.
\newblock pp.\  139--149, 06 2019.
\newblock \doi{10.1145/3307772.3328313}.

\bibitem[Li et~al.(2023{\natexlab{a}})Li, Yang, Islam, and Ren]{li2023making}
Li, P., Yang, J., Islam, M.~A., and Ren, S.
\newblock Making ai less "thirsty": Uncovering and addressing the secret water footprint of ai models, 2023{\natexlab{a}}.

\bibitem[Li et~al.(2020)Li, Sanjabi, Beirami, and Smith]{li2020fair}
Li, T., Sanjabi, M., Beirami, A., and Smith, V.
\newblock Fair resource allocation in federated learning, 2020.

\bibitem[Li et~al.(2023{\natexlab{b}})Li, Guo, Liu, Du, Li, and Liu]{li2023fairer}
Li, T., Guo, Q., Liu, A., Du, M., Li, Z., and Liu, Y.
\newblock Fairer: Fairness as decision rationale alignment, 2023{\natexlab{b}}.

\bibitem[Madras et~al.(2018{\natexlab{a}})Madras, Creager, Pitassi, and Zemel]{madras2018learning}
Madras, D., Creager, E., Pitassi, T., and Zemel, R.
\newblock Learning adversarially fair and transferable representations, 2018{\natexlab{a}}.

\bibitem[Madras et~al.(2018{\natexlab{b}})Madras, Pitassi, and Zemel]{learntodefer}
Madras, D., Pitassi, T., and Zemel, R.
\newblock Predict responsibly: improving fairness and accuracy by learning to defer.
\newblock In \emph{Proceedings of the 32nd International Conference on Neural Information Processing Systems}, NIPS'18, pp.\  6150–6160, Red Hook, NY, USA, 2018{\natexlab{b}}. Curran Associates Inc.

\bibitem[Malik et~al.(2019)Malik, Pananjady, Bhatia, Khamaru, Bartlett, and Wainwright]{pmlr-v89-malik19a}
Malik, D., Pananjady, A., Bhatia, K., Khamaru, K., Bartlett, P., and Wainwright, M.
\newblock Derivative-free methods for policy optimization: Guarantees for linear quadratic systems.
\newblock In Chaudhuri, K. and Sugiyama, M. (eds.), \emph{Proceedings of the Twenty-Second International Conference on Artificial Intelligence and Statistics}, volume~89 of \emph{Proceedings of Machine Learning Research}, pp.\  2916--2925. PMLR, 16--18 Apr 2019.
\newblock URL \url{https://proceedings.mlr.press/v89/malik19a.html}.

\bibitem[Mandi et~al.(2023)Mandi, Kotary, Berden, Mulamba, Bucarey, Guns, and Fioretto]{mandi2023decisionfocused}
Mandi, J., Kotary, J., Berden, S., Mulamba, M., Bucarey, V., Guns, T., and Fioretto, F.
\newblock Decision-focused learning: Foundations, state of the art, benchmark and future opportunities, 2023.

\bibitem[Mo \& Walrand(2000)Mo and Walrand]{879343}
Mo, J. and Walrand, J.
\newblock Fair end-to-end window-based congestion control.
\newblock \emph{IEEE/ACM Transactions on Networking}, 8\penalty0 (5):\penalty0 556--567, 2000.
\newblock \doi{10.1109/90.879343}.

\bibitem[Moerland et~al.(2023)Moerland, Broekens, Plaat, Jonker, et~al.]{moerland2023model}
Moerland, T.~M., Broekens, J., Plaat, A., Jonker, C.~M., et~al.
\newblock Model-based reinforcement learning: A survey.
\newblock \emph{Foundations and Trends{\textregistered} in Machine Learning}, 16\penalty0 (1):\penalty0 1--118, 2023.

\bibitem[Mohri et~al.(2019)Mohri, Sivek, and Suresh]{mohri2019agnostic}
Mohri, M., Sivek, G., and Suresh, A.~T.
\newblock Agnostic federated learning, 2019.

\bibitem[Nguyen et~al.(2023)Nguyen, Brandstetter, Kapoor, Gupta, and Grover]{nguyen2023climax}
Nguyen, T., Brandstetter, J., Kapoor, A., Gupta, J.~K., and Grover, A.
\newblock Climax: A foundation model for weather and climate.
\newblock \emph{arXiv preprint arXiv:2301.10343}, 2023.

\bibitem[Noriega-Campero et~al.(2018)Noriega-Campero, Bakker, Garcia-Bulle, and Pentland]{noriegacampero2018active}
Noriega-Campero, A., Bakker, M.~A., Garcia-Bulle, B., and Pentland, A.
\newblock Active fairness in algorithmic decision making, 2018.

\bibitem[Patterson et~al.(2022)Patterson, Gonzalez, H{\"o}lzle, Le, Liang, Munguia, Rothchild, So, Texier, and Dean]{patterson2022carbon}
Patterson, D., Gonzalez, J., H{\"o}lzle, U., Le, Q., Liang, C., Munguia, L.-M., Rothchild, D., So, D.~R., Texier, M., and Dean, J.
\newblock The carbon footprint of machine learning training will plateau, then shrink.
\newblock \emph{Computer}, 55\penalty0 (7):\penalty0 18--28, 2022.

\bibitem[Pessach \& Shmueli(2022)Pessach and Shmueli]{pessach2022review}
Pessach, D. and Shmueli, E.
\newblock A review on fairness in machine learning.
\newblock \emph{ACM Computing Surveys (CSUR)}, 55\penalty0 (3):\penalty0 1--44, 2022.

\bibitem[Peters \& Schaal(2006)Peters and Schaal]{peters2006policy}
Peters, J. and Schaal, S.
\newblock Policy gradient methods for robotics.
\newblock In \emph{2006 IEEE/RSJ International Conference on Intelligent Robots and Systems}, pp.\  2219--2225. IEEE, 2006.

\bibitem[Radovanovi{\'c} et~al.(2022)Radovanovi{\'c}, Koningstein, Schneider, Chen, Duarte, Roy, Xiao, Haridasan, Hung, Care, et~al.]{radovanovic2022carbon}
Radovanovi{\'c}, A., Koningstein, R., Schneider, I., Chen, B., Duarte, A., Roy, B., Xiao, D., Haridasan, M., Hung, P., Care, N., et~al.
\newblock Carbon-aware computing for datacenters.
\newblock \emph{IEEE Transactions on Power Systems}, 38\penalty0 (2):\penalty0 1270--1280, 2022.

\bibitem[Shah et~al.(2022)Shah, Osinski, Ichter, and Levine]{shah2022lmnav}
Shah, D., Osinski, B., Ichter, B., and Levine, S.
\newblock Lm-nav: Robotic navigation with large pre-trained models of language, vision, and action, 2022.

\bibitem[Shahrad et~al.(2020)Shahrad, Fonseca, Íñigo Goiri, Chaudhry, Batum, Cooke, Laureano, Tresness, Russinovich, and Bianchini]{shahrad2020serverless}
Shahrad, M., Fonseca, R., Íñigo Goiri, Chaudhry, G., Batum, P., Cooke, J., Laureano, E., Tresness, C., Russinovich, M., and Bianchini, R.
\newblock Serverless in the wild: Characterizing and optimizing the serverless workload at a large cloud provider, 2020.

\bibitem[Smart \& Salisbury(2015)Smart and Salisbury]{osti}
Smart, J.~G. and Salisbury, S.~D.
\newblock Plugged in: How americans charge their electric vehicles.
\newblock 7 2015.
\newblock \doi{10.2172/1369632}.
\newblock URL \url{https://www.osti.gov/biblio/1369632}.

\bibitem[Sun et~al.(2020)Sun, Li, Low, and Li]{ev-charging-curve}
Sun, C., Li, T., Low, S., and Li, V.
\newblock Classification of electric vehicle charging time series with selective clustering.
\newblock \emph{Electric Power Systems Research}, 189:\penalty0 106695, 12 2020.
\newblock \doi{10.1016/j.epsr.2020.106695}.

\bibitem[Sun et~al.(2021)Sun, Kennedy, Zhan, Anderson, Yue, and Perona]{suntaskp}
Sun, J.~J., Kennedy, A., Zhan, E., Anderson, D.~J., Yue, Y., and Perona, P.
\newblock Task programming: Learning data efficient behavior representations.
\newblock In \emph{2021 IEEE/CVF Conference on Computer Vision and Pattern Recognition (CVPR)}, pp.\  2875--2884, 2021.
\newblock \doi{10.1109/CVPR46437.2021.00290}.

\bibitem[{U.S. Department of Transportation}(2017)]{NHTS}
{U.S. Department of Transportation}.
\newblock {Federal Highway Administration. National Household Travel Survey}, 2017.
\newblock URL \url{http://nhts.ornl.gov}.

\bibitem[{U.S. Energy Information Administration}()]{EIA}
{U.S. Energy Information Administration}.
\newblock Electricity -- consumption of fuels used to generate electricity.
\newblock URL \url{https://www.eia.gov/electricity/data.php}.

\bibitem[Wan et~al.(2023)Wan, Zha, Liu, and Zou]{fairnSur1}
Wan, M., Zha, D., Liu, N., and Zou, N.
\newblock In-processing modeling techniques for machine learning fairness: A survey.
\newblock \emph{ACM Trans. Knowl. Discov. Data}, 17\penalty0 (3), mar 2023.
\newblock ISSN 1556-4681.
\newblock \doi{10.1145/3551390}.
\newblock URL \url{https://doi.org/10.1145/3551390}.

\bibitem[Wang \& Paranjape(2015)Wang and Paranjape]{sche}
Wang, Z. and Paranjape, R.
\newblock Optimal scheduling algorithm for charging electric vehicle in a residential sector under demand response.
\newblock In \emph{2015 IEEE Electrical Power and Energy Conference (EPEC)}, pp.\  45--49, 2015.
\newblock \doi{10.1109/EPEC.2015.7379925}.

\bibitem[Wilder et~al.(2019)Wilder, Dilkina, and Tambe]{wilder2019melding}
Wilder, B., Dilkina, B., and Tambe, M.
\newblock Melding the data-decisions pipeline: Decision-focused learning for combinatorial optimization.
\newblock In \emph{Proceedings of the AAAI Conference on Artificial Intelligence}, volume~33, pp.\  1658--1665, 2019.

\bibitem[Wilder et~al.(2020)Wilder, Ewing, Dilkina, and Tambe]{wilder2020end}
Wilder, B., Ewing, E., Dilkina, B., and Tambe, M.
\newblock End to end learning and optimization on graphs, 2020.

\bibitem[Wilder et~al.(2021)Wilder, Horvitz, and Kamar]{learnh}
Wilder, B., Horvitz, E., and Kamar, E.
\newblock Learning to complement humans.
\newblock In \emph{Proceedings of the Twenty-Ninth International Joint Conference on Artificial Intelligence}, IJCAI'20, 2021.
\newblock ISBN 9780999241165.

\bibitem[Yang et~al.(2023)Yang, Nachum, Du, Wei, Abbeel, and Schuurmans]{yang2023foundation}
Yang, S., Nachum, O., Du, Y., Wei, J., Abbeel, P., and Schuurmans, D.
\newblock Foundation models for decision making: Problems, methods, and opportunities, 2023.

\bibitem[Yu et~al.(2020)Yu, Thomas, Yu, Ermon, Zou, Levine, Finn, and Ma]{yu2020mopo}
Yu, T., Thomas, G., Yu, L., Ermon, S., Zou, J.~Y., Levine, S., Finn, C., and Ma, T.
\newblock Mopo: Model-based offline policy optimization.
\newblock \emph{Advances in Neural Information Processing Systems}, 33:\penalty0 14129--14142, 2020.

\end{thebibliography}
\bibliographystyle{icml2024}

\newpage
\appendix
\onecolumn
\section{Appendix}
In the appendix, we offer additional details to complement the main text. The content is organized as follows:
\begin{itemize}
    \item Section~\ref{s:app-grad}. Detailed calculations to derive the gradient in Eq.~(\ref{eq:grad-nob}) of Section~\ref{s:Solver}.
    \item Section~\ref{app:proofs}. Providing proofs for the theorems and propositions in Section~\ref{s:uniTheory}.
    \item Section~\ref{app:impl}. Additional empirical details of implementation, datasets, and results for case studies in Section~\ref{s:exps}.
    \item Section~\ref{s:app-mixed-a}. Additional experiments where downstream agents have different objective cost functions.
    \item Section~\ref{app:withBeta}. Proposal of a combined objective that explicitly incorporates the loss of the public model, $\mathcal{L}_f$, via a balancing hyperparameter $\beta$, providing a more nuanced control over the tradeoff between fairness and accuracy. We also provide the  empirical results associated with this objective.
\end{itemize}

\subsection{Details of Computing the Gradient}
\label{s:app-grad}
Completing Section~\ref{s:Solver}, we present a detailed derivation of $\nabla_\theta \mathbb{E}[\mathcal{L}^q(\theta)]$. 
By the definition of $\mathcal{L}^q(\theta)$ in the main context, the expected cost is defined as
\begin{align}
\hspace{-1.5em}
\begin{split}
     \mathbb{E}[\mathcal{L}^q(\theta)] &= \mathbb{E}_{(X,Y,\hat{Y},\Xi) \sim p_\theta } \biggl[\sum_{m=1}^{M}  \Bigr(\frac{1}{B_m} \sum_{i=1}^{B_m}  C_{m, i}\Bigr)^{q+1}  \biggl]\\
     &= \int p_\theta (X,Y,\hat{Y},\Xi) \biggl[  \sum_{m=1}^{M} \bigr(\frac{1}{B_m}  \sum_{i=1}^{B_m}  C_{m, i}\bigr)^{q+1} 
\biggl] d (X,Y,\hat{Y},\Xi).
\label{eq:grad-int}
\end{split}
\end{align} 
Subsequently, we obtain
\begin{align*}
\begin{split}
\nabla_\theta \mathbb{E}[\mathcal{L}^q(\theta)]
=&  \int \nabla_\theta p_\theta (X,Y,\hat{Y},\Xi) \biggl[  \sum_{m=1}^{M} \bigr(\frac{1}{B_m} \sum_{i=1}^{B_m}  C_{m, i}\bigr)^{q+1}   \biggl] d (X,Y,\hat{Y},\Xi) \nonumber\\ 
 =& \int p_\theta (X,Y,\hat{Y},\Xi) \nabla_\theta \log p_\theta (X,Y,\hat{Y},\Xi) \biggl[  \sum_{m=1}^{M}  \bigr(\frac{1}{B_m} \sum_{i=1}^{B_m} C_{m, i}\bigr)^{q+1}   \biggl] d (X,Y,\hat{Y},\Xi).
\end{split}
\end{align*}

By decomposing the joint distribution $p_\theta(X,Y,\hat{Y},\Xi)$ based on the chain rule as $$p_\theta(X,Y,\hat{Y},\Xi)= P(\Xi)\cdot P(Y\mid X)\cdot \sigma_{\theta}(\hat{Y}\mid X)\cdot P(X),$$ we have
\begin{align}
\hspace{-1.5em}
\label{eq:grad-mid}
  \nabla_\theta \mathbb{E}[\mathcal{L}^q(\theta)]
=&   \int p_\theta (X,Y,\hat{Y},\Xi) \nabla_\theta \log [P(\Xi)\cdot P(Y\mid X)\cdot \sigma_{\theta}(\hat{Y}\mid X)\cdot P(X)] 
     \biggl[  \sum_{m=1}^{M}  \bigr(\frac{1}{B_m} \sum_{i=1}^{B_m} C_{m, i}\bigr)^{q+1} 
      \biggl]
d (X,Y,\hat{Y},\Xi) \nonumber\\ 
    =& \int p_\theta (X,Y,\hat{Y},\Xi) \nabla_\theta \log \sigma_\theta (\hat Y|X)  \biggl[  \sum_{m=1}^{M}   \bigr(\frac{1}{B_m} \sum_{i=1}^{B_m} C_{m, i}\bigr)^{q+1} 
     \biggl]
 d (X,Y,\hat{Y},\Xi) \nonumber\\
    =& \mathbb{E}_{(X,Y, \hat{Y}, \Xi) \sim p_\theta }\left\{ \nabla_\theta \log \sigma_\theta(\hat Y|X) \biggl[  \sum_{m=1}^{M} \bigr(\frac{1}{B_m} \sum_{i=1}^{B_m} C_{m, i}\bigr)^{q+1} 
     \biggl]\right\}\\
     \nonumber
     =& \mathbb{E}_{(X,Y, \hat{Y}, \Xi) \sim p_\theta }\left\{ \biggl[\sum_{m=1}^{M}\sum_{i=1}^{B_m}\nabla_\theta\log \sigma_\theta (\hat y_{m,i}| x_{m,i})\biggl]\biggl[  \sum_{m=1}^{M} \bigr(\frac{1}{B_m} \sum_{i=1}^{B_m} C_{m, i}\bigr)^{q+1} 
     \biggl]\right\}.
\end{align}
Rewriting Eq.~(\ref{eq:grad-mid}), we obtain the gradient stated in the Eq.~(\ref{eq:grad-nob}).

\subsection{Theoretical Proofs}
\label{app:proofs}
\subsubsection{Proof of Theorem \ref{proposition:beta-0-q-1}}
\begin{proof} Let $\theta^*_{q=0}$ and $\theta^*_{q=1}$ denote optimal solutions of $\min_\theta \mathcal{L}^{q=0}(\theta)$ and $\min_\theta \mathcal{L}^{q=1}(\theta)$ respectively. It follows that
\begin{equation}
\label{eq:lemma1}
    \begin{split}
    \mathsf{Var}&(C_1(\theta^*_{q=1}), ..., C_M(\theta^*_{q=1}))=\frac{1}{M}\sum_{m=1}^{M} C_m^{2}(\theta^*_{q=1}) - \Bigr(\frac{1}{M}\sum_{m=1}^{M} C_m(\theta^*_{q=1})\Bigr)^2 \\
         & \leq \frac{1}{M}\sum_{m=1}^{M} C_m^{2}(\theta^*_{q=0}) -  \Bigr(\frac{1}{M}\sum_{m=1}^{M} C_m(\theta^*_{q=1})\Bigr)^2
         \\
        & \leq \frac{1}{M}\sum_{m=1}^{M} C_m^2(\theta^*_{q=0})- \Bigr(\frac{1}{M}\sum_{m=1}^{M} C_m(\theta^*_{q=0})\Bigr)^2 = \mathsf{Var}(C_1(\theta^*_{q=0}), ..., C_M(\theta^*_{q=0})),
    \end{split}
\end{equation}
where the first inequality holds since $\theta^*_{q=1}$ minimizes $\frac{1}{M}\sum_{m=1}^{M} C_m^{2}(\theta^*_{q=1})$, and the second inequality holds because $\theta^*_{q=0}$ minimizes $\frac{1}{M}\sum_{m=1}^{M} C_m(\theta^*_{q=0})$.
\end{proof}

\subsubsection{Proof of Theorem~\ref{proposition:2}}
To prove Theorem~\ref{proposition:2}, it suffices to show that for any $q \in \mathbb{R}^+$, $M \in \mathbb{N}$, a small increase in $q$ can result in a more equitable solution for the Equitable Objective, based on Definition~\ref{def2}.
Specifically, we prove the derivative of $H_\text{norma}\bigr(C^{q+1}(\theta^*_p)\bigr)$ w.r.t. the variable $p$ at the point $p=q$ is non-negative, i.e.,
\begin{equation}
\label{eq:p=q_proof}
    \frac{\partial \mathbb{H}_\text{norm}\bigr(C^{q+1}(\theta^*_p)\bigr)}{\partial p}|_{p=q} \geq 0.
\end{equation}

\begin{proof}[Proof of the statement above]
For simplicity of notation, we denote the gradient of $C^{q+1}(\theta)$ with respect to $\theta$ as the vector $\nabla_{\theta} C^{q+1}(\theta)$, and the second order derivative  of $C^{q+1}(\theta)$ with respect to $\theta$ as the Hessian matrix $\nabla^2_{\theta} C^{q+1}(\theta)$. If $C(\theta) \neq 0$, we can easily verify that the Hessian matrix $\nabla^2 C^{q+1}(\theta)$ is positive definite for all $q \geq 0$. More specifically, we have
\begin{equation}
    \nabla_{\theta} \left(\nabla_{\theta} C^{q+1}(\theta)\right) = (q+1) \nabla_{\theta} \left( C^{q}(\theta)\nabla_{\theta} C(\theta)\right) = (q+1) C^{q}(\theta)\nabla_{\theta}^2 C(\theta) + (q+1)q C^{q-1}(\theta) \nabla_{\theta} C(\theta) \nabla_{\theta} C(\theta)^\top .
\end{equation}
By definition, $\nabla_{\theta}^2 C(\theta)$ is positive definite and $C(\theta) \nabla_{\theta} C(\theta)^\top$ is semi-positive definite. Since all the coefficients are non-negative, we  conclude the Hessian matrix $\nabla^2_{\theta} C^q(\theta)$ is positive definite when $C(\theta) \neq 0$. 

If $C(\theta) = 0$, both the vector $\nabla_{\theta} C^{q+1}(\theta)$ and the matrix $\nabla^2_{\theta} C^{q+1}(\theta)$ are equal to zero. 

Subsequently, the proof of Eq.~(\ref{eq:p=q_proof}) proceeds as follows:
\begin{equation}\label{eq:app-eq-19}
    \begin{split}
         \frac{\partial \mathbb{H}_\text{norm}\bigr(C^{q+1}(\theta^*_p)\bigr)}{\partial p}|_{p=q}   =& - \frac{\partial}{\partial p} \sum_{m} \frac{C_m^{q+1}(\theta_p^*)}{\sum_m C_m^{q+1}(\theta_p^*)} \ln \Bigr(\frac{C_m^{q+1}(\theta_p^*)}{\sum_m C_m^{q+1}(\theta_p^*)}\Bigr)|_{p=q}\\
         =& - \frac{\partial}{\partial p} \sum_{m} \frac{C_m^{q+1}(\theta_p^*)}{\sum_m C_m^{q+1}(\theta_p^*)} \ln\Bigr( C_m^{q+1}(\theta_p^*)\Bigr)|_{p=q} + \frac{\partial}{\partial p} \ln \sum_m C_m^{q+1}(\theta_p^*)|_{p=q} .
    \end{split}
\end{equation}
For the second term in Eq.~\eqref{eq:app-eq-19}, we have
\begin{equation}
    \begin{aligned}
        \frac{\partial}{\partial p} \ln \sum_m C_m^{q+1}(\theta_p^*)|_{p=q}  &= \frac{\sum_{m} \nabla_{\theta} C_m^{q+1}(\theta_p^*)^\top \cdot \frac{\partial \theta_p^*}{\partial p} }{\sum_m C_m^{q+1}(\theta_p^*)}|_{p=q}\\
        &=  \frac{1}{\sum_m C_m^{q+1} (\theta_p^*)} \cdot \frac{\partial \theta_p^*}{\partial p}^\top|_{p=q} \cdot \sum_{m} \nabla_{\theta} C_m^{q+1}(\theta_p^*).
    \end{aligned}
\end{equation}
Since the $\theta_p^*$ is an optimal solution for the $\mathcal{L}^p(\theta)$ objective, then for $q=p$, by definition we have $\sum_{m} \nabla_{\theta}  C_m^{q+1}(\theta_p^*)= 0$. Therefore, the second term of Eq.~\eqref{eq:15} is zero. The derivative can then be rewritten as 
\begin{equation}
\label{eq:15}
    \begin{split}
         \frac{\partial \mathbb{H}_\text{norm}\bigr(C^{q+1}(\theta^*_p)\bigr)}{\partial p}|_{p=q} = & -\sum_m \frac{ (\frac{\partial}{\partial p} \theta_p^*|_{p=q})^\top \nabla_\theta C_m^{q+1}(\theta_p^*)}{\sum_m C_m^{q+1}(\theta_p^*)} \ln(C_m^{q+1}(\theta_p^*)) \\
         & -  \sum_m \frac{C_m^{q+1}(\theta_p^*)}{\sum_m C_m^{q+1}(\theta_p^*)} \frac{ (\frac{\partial}{\partial p} \theta_p^*|_{p=q-1}) ^T\nabla_\theta C_m^{q+1}(\theta_p^*)}{ C_m^{q+1}(\theta_p^*)} \\
         =& -\sum_m \frac{ (\frac{\partial}{\partial p} \theta_p^*|_{p=q})^\top\nabla_\theta C_m^{q+1}(\theta_p^*)}{\sum_m C_m^{q+1}(\theta_p^*)} (\ln(C_m^{q+1}(\theta_p^*))+1). 
    \end{split}
\end{equation}
Here, if for all $M \in \mathbb{N}$, the costs $C_m(\theta_p^*)$ are all zero costs. Therefore, we see that $\frac{\partial \mathbb{H}_\text{norm}\bigr(C^{q+1}(\theta^*_p)\bigr)}{\partial p}|_{p=q} = 0 $, leading to the desirable result.

For the non-trivial case, there exists some $M\in \mathbb{N}$ such that $C_m(\theta_p^*)>0$. Since $\theta_p^*$ is an optimal solution of our objective function, we have $\sum_{m} \nabla_{\theta}  C_m^{p+1}(\theta_p^*)= 0$ for all $p\geq 0$. In other words, $\frac{\partial}{\partial p} \sum_{m} \nabla_{\theta}C_m^{p+1}(\theta_p^*)= 0$. Then we can calculate the gradient as follows
\begin{equation}
    \begin{aligned}
        &\frac{\partial}{\partial p} \sum_{m} \nabla_{\theta}C_m^{p+1}(\theta_p^*) \\
        = & \sum_m \nabla_\theta^2 C_m^{p+1}(\theta_p^*) \frac{\partial} {\partial_p} \theta_p^* + \sum_m \Bigl( C_m^p(\theta_p^*) + (p+1) C_m^p(\theta_p^*) \ln(C_m(\theta_p^*)) \Bigr) \nabla_{\theta}  C_m(\theta_p^*)\\
        = & \sum_m \nabla_\theta^2 C_m^{p+1}(\theta_p^*) \frac{\partial} {\partial_p} \theta_p^* + \frac{1}{p+1}\sum_m \Bigl( (p+1) C_m^p(\theta_p^*) \nabla_{\theta}C_m(\theta_p^*)   \Bigr) + \Bigl( (p+1) C_m^p(\theta_p^*) \nabla_{\theta}C_m(\theta_p^*)  \ln(C_m(\theta_p^*))  \Bigr) \\
        =& \sum_m \nabla_\theta^2 C_m^{p+1}(\theta_p^*) \frac{\partial}{\partial_p} \theta_p^* + \frac{1}{p+1}\sum_m \Bigl(\ln (C_m^{p+1}(\theta_p^*)) + 1)\nabla_\theta C_m^{p+1}(\theta_p^*) \Bigr).
    \end{aligned}
\end{equation}
To summarize, we have
\begin{equation}
\label{eq:16}
    \sum_m \nabla_\theta^2 C_m^{p+1}(\theta_p^*) \frac{\partial}{\partial_p} \theta_p^* + \frac{1}{p+1}\sum_m (\ln (C_m^{p+1}(\theta_p^*)) + 1)\nabla_\theta C_m^{p+1}(\theta_p^*) = 0.
\end{equation}

In our non-trivial case, there exists at least one $m\in \mathbb{N}$, such that the Hessian matrix $\nabla_\theta^2 C_m^{p+1}(\theta_p^*)$ is positive definite. Then the matrix $\biggl({\sum_m \nabla_{\theta}^2 C_m^{p+1}(\theta_p^*)}\biggl)$ is also positive definite. 
Therefore, we can calculate the gradient $\frac{\partial}{\partial_p} \theta_p^*$ as below 
\begin{equation}
\label{eq:17}
    \frac{\partial}{\partial_p} \theta_p^* = -\frac{1}{p+1} \biggl({\sum_m \nabla_\theta^2 C_m^{p+1} (\theta_{p}^*)}\biggl)^{-1} \sum_m (\ln (C_m^{p+1}(\theta_p^*)) +1)\nabla_\theta C_m^{p+1}(\theta_p^*).
\end{equation}

Plugging Eq.~(\ref{eq:17}) into Eq.~(\ref{eq:15}), we have
\begin{equation}
    \begin{aligned}
        \frac{\partial \mathbb{H}_\text{norm}\bigr(C^{q+1}(\theta^*_p)\bigr)}{\partial p} \biggl|_{p=q} = & \frac{\sum_m (\ln (C_m^{q+1}(\theta_p^*) )+1)\nabla_\theta C_m^{q+1}(\theta_p^*)^\top}{(p+1) \sum_m C_m^{q+1}(\theta_p^*)} \biggl({\sum_m \nabla_\theta^2 C_m^{p+1}(\theta_q^*)}\biggl)^{-1}\\
        &\cdot \sum_m \nabla_\theta C_m^{p+1}(\theta_p^*)(\ln(C_m^{p+1}(\theta_p^*))+1) \biggl|_{p=q}.
    \end{aligned}
\end{equation}
Since the matrix $\biggl({\sum_m \nabla_{\theta}^2 C_m^q(\theta_p^*)}\biggl)$ is positive definite and the coefficient $q\sum_m C_m^q(\theta_p^*)$ is positive, we conclude that 
$\frac{\partial \mathbb{H}_\text{norm}\bigr(C^{q+1}(\theta^*_p)\bigr)}{\partial p}|_{p=q} \geq 0$.
\end{proof}

As a result,  Eq.~(\ref{eq:p=q_proof}) implies that for any $p$, the performance distribution of $\{C_1^p(\theta_{p+\epsilon}^*), ..., C_M^p(\theta_{p+\epsilon}^*)\}$ exhibits greater uniformity compared to the distribution of $\{C_1^p(\theta_{p}^*), ..., C_M^p(\theta_{p}^*)\}$, provided that the value of $\epsilon$ is sufficiently small. 

\begin{corollary}
    Let $C(\theta)$ be twice differentiable in $\theta$ with $\nabla^2 C(\theta) > 0$ (positive definite), for the special case $M = 2$, the derivative of $\mathbb{H}_\text{norm}\bigr(C^{q+1}(\theta^*_p)\bigr)$ w.r.t. the evaluation point $p$ is non-negative for all $p\geq 0$ and $q \geq 0$, \textit{i.e.},
\begin{equation}
    \frac{\partial \mathbb{H}_\text{norm}\bigr(C^{q+1}(\theta^*_p)\bigr)}{\partial p} \geq 0.
\end{equation}
\end{corollary}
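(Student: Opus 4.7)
The key observation is that for $M=2$ the normalized entropy depends on $\theta_p^*$ only through the scalar ratio $r(p) := C_1(\theta_p^*)/C_2(\theta_p^*)$. Setting $\phi(t) := -\frac{t}{1+t}\log\frac{t}{1+t} - \frac{1}{1+t}\log\frac{1}{1+t}$, one has $\mathbb{H}_\text{norm}\bigl(C^{q+1}(\theta_p^*)\bigr) = \phi(r(p)^{q+1})$, and a short computation yields $\phi'(t) = -\log(t)/(1+t)^2$, so $\phi'(t)$ carries the opposite sign of $\log t$. The chain rule then gives
\begin{equation*}
\frac{\partial}{\partial p}\mathbb{H}_\text{norm}\bigl(C^{q+1}(\theta_p^*)\bigr) = \phi'\bigl(r(p)^{q+1}\bigr)\,(q+1)\,r(p)^q\,r'(p),
\end{equation*}
and since $(q+1)r(p)^q > 0$ the whole corollary reduces to the drift statement: $r'(p)$ has the opposite sign of $\log r(p)$ for every $p \geq 0$.

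To prove the drift, I would implicitly differentiate the first-order optimality condition. At $\theta_p^*$ the identity $\nabla_\theta \mathcal{L}^p(\theta_p^*) = 0$ simplifies to $C_1^p \nabla C_1 + C_2^p \nabla C_2 = 0$, which forces the anti-parallel relation $\nabla C_2 = -(C_1/C_2)^p \nabla C_1$. Differentiating in $p$ produces the linear system $A(p)\,\theta_p' = -[C_1^p \log C_1\,\nabla C_1 + C_2^p \log C_2\,\nabla C_2]$ with
\begin{equation*}
A(p) := C_1^p \nabla^2 C_1 + C_2^p \nabla^2 C_2 + p\bigl[C_1^{p-1}\nabla C_1\, \nabla C_1^{\top} + C_2^{p-1}\nabla C_2\, \nabla C_2^{\top}\bigr],
\end{equation*}
which is positive definite under the standing assumption $\nabla^2 C(\theta) > 0$. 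Substituting the anti-parallel relation into the right-hand side collapses it to the single direction $-C_1^p \log r(p)\,\nabla C_1$.

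From here $C_1'(p) = \nabla C_1^{\top}\theta_p' = -C_1^p \log r(p)\bigl(\nabla C_1^{\top} A(p)^{-1} \nabla C_1\bigr)$, and since $A(p)^{-1}$ is positive definite the quadratic form is strictly positive, so $C_1'(p)$ has the opposite sign of $\log r(p)$. Using $C_2'(p) = -(C_1/C_2)^p C_1'(p)$ from the anti-parallel relation, a direct manipulation shows $(\log r)'(p)$ is a strictly positive multiple of $C_1'(p)$, so $r'(p)$ inherits the sign of $C_1'(p)$, which is the opposite of $\log r(p)$. Combined with the chain-rule expression in the first paragraph this yields the corollary; the boundary case $r(p) = 1$ is automatic because both $\phi'$ and $\log r$ vanish there.

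The main obstacle is organizing the implicit-differentiation step so that the positive-definiteness of $A(p)$ translates cleanly into a scalar sign statement for $C_1'(p)$. This works here precisely because the $M=2$ optimality condition forces $\nabla C_1$ and $\nabla C_2$ to be collinear, collapsing the right-hand side of the linear system to a single direction; the same collapse fails for $M \geq 3$, which is why the corollary is restricted to $M=2$ and why Theorem \ref{proposition:2} only asserts the sign along the diagonal $p = q$ in the general case.
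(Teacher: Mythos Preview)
Your proposal is correct and follows essentially the same route as the paper: both reduce the entropy derivative to the sign of $\partial_p(C_1/C_2)$ via the binary-entropy chain rule, and both extract that sign from the first-order optimality condition $C_1^p\nabla C_1 + C_2^p\nabla C_2 = 0$. The only difference is organizational---the paper invokes Theorem~\ref{proposition:2} at the diagonal $p=q$ to read off the sign of $r'(p)$ (noting that this quantity is independent of $q$), whereas you redo the implicit differentiation directly, making your argument self-contained.
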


\begin{proof}
Let $w_q(\theta) = \frac{C_1^{q+1}(\theta)}{C_1^{q+1}(\theta) + C_2^{q+1}(\theta)}$. Without loss of generality, we assume $w_q(\theta_p^*) \in (0, \frac{1}{2})$. 
If $w_q(\theta_p^*) = \frac{1}{2}$, the gradient of norm $\mathbb{H}_\text{norm}$ is defined as $-\ln(\frac{w_q(\theta_p^*)}{1 - w_q(\theta_p^*)}) \cdot \frac{\partial w_q(\theta_p^*)}{\partial p}$, which trivially equals to zero for any $q$ and $p$. If $w_q(\theta) \in (\frac{1}{2}, 1)$, we can flip the label of $C_1$ and $C_2$ to make sure $w_q(\theta) \in (0, \frac{1}{2})$

Given $M=2$, by applying the chain rule, the gradient of the norm can be rewritten as 
\begin{equation}
    \begin{aligned}
        &\frac{\partial \mathbb{H}_\text{norm}\bigr(C^{q+1}(\theta^*_p)\bigr)}{\partial p}\\
        =& -\ln(\frac{w_q(\theta_p^*)}{1 - w_q(\theta_p^*)}) \cdot \frac{\partial w_q(\theta_p^*)}{\partial p} \\
        =& -\ln(\frac{w_q(\theta_p^*)}{1 - w_q(\theta_p^*)}) \cdot \frac{\partial w_q(\theta_p^*)}{\partial \left(\frac{C_1(\theta_p^*)}{C_2(\theta_p^*)}\right)^{q+1}}  \cdot \frac{\partial}{\partial p} \left(\frac{C_1(\theta_p^*)}{C_2(\theta_p^*)}\right)^{q+1}\\
        =& -\ln(\frac{w_q(\theta_p^*)}{1 - w_q(\theta_p^*)}) \cdot \left(\frac{C_2^{q+1}(\theta^*_p)}{C_1^{q+1}(\theta^*_p) + C_2^{q+1}(\theta^*_p)}\right)^2 \frac{\partial}{\partial p} \left(\frac{C_1(\theta_p^*)}{C_2(\theta_p^*)}\right)^{q+1}\\
        =& -\ln(\frac{w_q(\theta_p^*)}{1 - w_q(\theta_p^*)}) \cdot \left(\frac{C_2^{q+1}(\theta^*_p)}{C_1^{q+1}(\theta^*_p) + C_2^{q+1}(\theta^*_p)}\right)^2 
        (q+1) \left(\frac{C_1(\theta_p^*)}{C_2(\theta_p^*)}\right)^{q} \cdot \frac{\partial}{\partial p} \left(\frac{C_1(\theta_p^*)}{C_2(\theta_p^*)}\right) \\
        =& -\ln(\frac{w_q(\theta_p^*)}{1 - w_q(\theta_p^*)}) \cdot \left(1 - w_q(\theta_p^*) \right)^2 
        (q+1) \left(\frac{C_1(\theta_p^*)}{C_2(\theta_p^*)}\right)^{q} \cdot \frac{\partial}{\partial p} \left(\frac{C_1(\theta_p^*)}{C_2(\theta_p^*)}\right).
    \end{aligned}
\end{equation}
For any $q\geq 0$, it's obvious that 
\begin{equation}
    \ln(\frac{1 - w_q(\theta_p^*)}{w_q(\theta_p^*)}) \cdot \left(1 - w_q(\theta_p^*) \right)^2 
        (q+1) \left(\frac{C_1(\theta_p^*)}{C_2(\theta_p^*)}\right)^{q} \cdot \frac{\partial}{\partial p}  \geq 0.
\end{equation}
According to Eq.~\eqref{eq:p=q_proof}, in the point $q = p$, we have $\frac{\partial \mathbb{H}_\text{norm}\bigr(C^{q+1}(\theta^*_p)\bigr)}{\partial p} \geq 0$, which is equivalent to 
\begin{equation}\label{eqn:gradient_of_C}
    \frac{\partial}{\partial p} \left(\frac{C_1(\theta_p^*)}{C_2(\theta_p^*)}\right) \geq 0.
\end{equation}

Since we assume $w_q(\theta) \in (0,\frac{1}{2})$, then $C_1(\theta) < C_2(\theta)$. For any $q' \geq 0$, we also have $w_{q'}(\theta) \in (0,\frac{1}{2})$ and the following
\begin{equation}\label{eqn:q_prime_factor}
    \ln\left(\frac{1 - w_{q'}(\theta_p^*)}{w_{q'}(\theta_p^*)}\right) \cdot \left(1 - w_{q'}(\theta_p^*) \right)^2 
        \cdot (q'+1) \left(\frac{C_1(\theta_p^*)}{C_2(\theta_p^*)}\right)^{q'} \geq 0.
\end{equation}
By multiplying Eq.~\eqref{eqn:gradient_of_C} with Eq.~\eqref{eqn:q_prime_factor}, for $M=2$, we conclude for any $p \geq 0$ and $q \geq 0$, 
\begin{equation}
    \frac{\partial \mathbb{H}_\text{norm}\bigr(C^q(\theta^*_p)\bigr)}{\partial p} \geq 0.
\end{equation}
\end{proof}

\subsubsection{Proof of Proposition~\ref{proposition:5}}
\begin{proof}
We start with \textit{a specific} $\kappa$. 
\label{proofOfLemma4}
Similar to the proof in~\citet{mohri2019agnostic}, for any $\delta > 0$, the following inequality holds with probability at least $1-\delta$ for  $h \in H$:
\begin{equation}
\label{eq:ineq}
    \mathcal{J}_\kappa(h) \leq \mathcal{L}_\kappa(h) + \mathbb{E}\left[\max_{h \in H} \mathcal{J}_\kappa(h) - \mathcal{L}_\kappa(h)\right] + B\sqrt{\sum_{m} \frac{\kappa_m^2}{2N_m}\log\frac{1}{\delta}}.
\end{equation}
Using the H$\ddot{o}$lder's inequity, we have
\begin{equation}
    \mathcal{L}_\kappa(h) = \sum_m \kappa_m C_m \leq \left(\sum_m \kappa_m^p\right)^{\frac{1}{p}} \left(\sum_{m} C_m^{q+1}\right)^{\frac{1}{q+1}} = ||\kappa||_p\Tilde{\mathcal{L}}^q(h), \frac{1}{p} + \frac{1}{q+1}=1.
\end{equation}
Plugging $\mathcal{L}_\kappa(h) \leq ||\kappa||_p\Tilde{L}^q(h)$ into Eq.~(\ref{eq:ineq}), we obtain for $h \in H$,
\begin{equation}
\label{eq:gen1}
    \mathcal{J}_\kappa(h) \leq ||\kappa||_p\Tilde{\mathcal{L}}^q(h) + \mathbb{E}\left[\max_{h \in H} \mathcal{J}_\kappa(h) - \mathcal{L}_\kappa(h)\right] + B\sqrt{\sum_{m} \frac{\kappa_m^2}{2N_m}\log\frac{1}{\delta}},
\end{equation}
where $\frac{1}{p} + \frac{1}{q+1}=1$.

Therefore, Eq.~(\ref{eq:gen2}) in Proposition \ref{proposition:5}  can be readily derived from Eq.~(\ref{eq:gen1}) by considering the maximum value across all potential $\kappa$ values within $\Delta$.
\end{proof}

\textbf{Discussions} Deriving the optimal value of $q$ that results in the tightest generalization bound from Proposition~\ref{proposition:5} is not trivial. In practice, our proposed Equitable Objective allows us to fine-tune a range of $q$ values to strike a balance between performance equity/uniformity and accuracy.

\subsection{Additional Experiments Details and Results}
\label{app:impl}
\subsubsection{Additional Empirical Details}
\label{app:impl-d}
For the data centers application in Section~\ref{sAPPI}, within each agent, the dataset is randomly partitioned, with $67\%$ allocated as the training set and the remaining portion as the testing set. As for the EV charging application in Section~\ref{sAPPII}, the ratio between training and testing in each agent is 70\% vs. 30\%. We set the learning rate as $0.05$ for the data centers application and $1e-4$ for the EV charging application. We employ the Adam optimizer with a scheduler featuring a step size of $50$ and a decay factor of $0.5$. In both applications, the batch size is set as $128$. For predicting the next time step in the data centers application, a sequence length of $12$ is utilized, while in the EV charging application, the prediction involves the next charging time window spanning $12$ time steps, a sequence length of $12$ is also employed. The LSTM model employed in data centers application has a hidden size of $50$. In the EV charging application, the Transformer model consists of a single-layer encoder-decoder with positional encoding, utilizing a feature size of $250$.

\begin{figure}[h]
    \centering
    \includegraphics[width=0.8\linewidth]{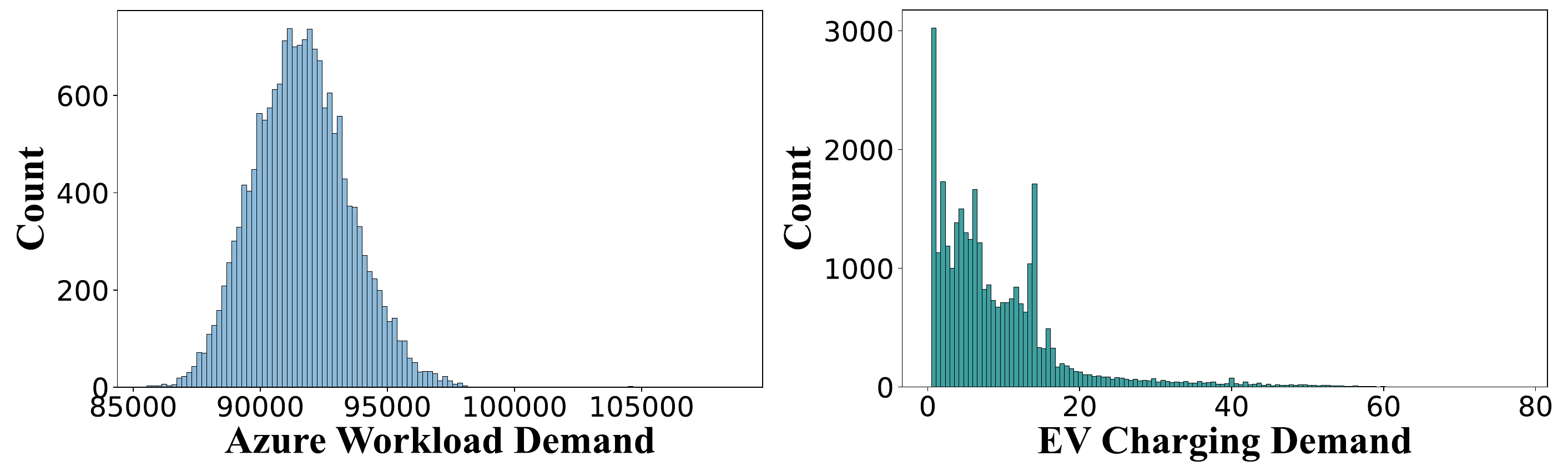}
    \caption{Depictions of (a) Azure workload demands~\cite{shahrad2020serverless}; (b) EV charging demands in ACN-Data~\cite{acndata}.  }
    \label{fig:demands}
\end{figure}

\begin{figure}[h]
    \centering
    \includegraphics[width=\linewidth]{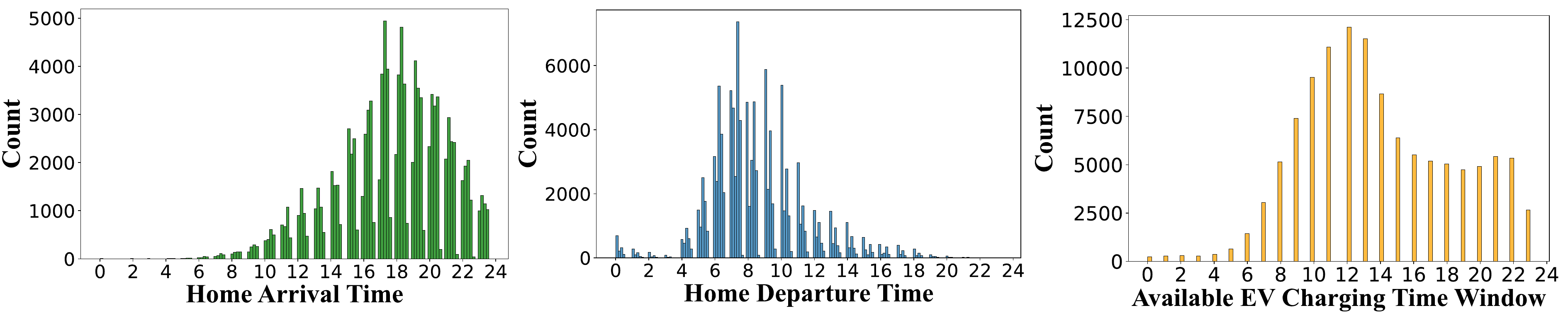}
    \caption{Depictions of home arrival, home departure and available charging time window for residential EV based on the NHTS government data~\cite{NHTS}. }
    \label{fig:home-trips}
\end{figure}

In the EV charging scheduling application of Section~\ref{sAPPII}, we use the publicly available National Household Travel Survey (NHTS) data~\cite{NHTS} to approximate the available charging time window, \textit{i.e.,} from $s_j$ to $e_j$, for residential sectors~\cite{sche}. The NHTS contains the travel logs of $117,222$ American households' vehicles, detailing the number of trips for each household and the start and end times for each trip per day. We assume the distribution for the initial charging time $s_j$ and end time $e_j$ of EV are the same as the distribution of home arrival and home departure times, respectively. We use the time when the last trip of a household concludes from NHTS as the daily home arrival time. Similarly, we designate the time when the first daily trip begins from NHTS as the daily home departure time~\cite{sche}. 

\subsubsection{Additional Details of Datasets}
\label{s:app-data}
We depict the distribution of Azure workload demand~\cite{shahrad2020serverless} and EV charging electricity demands~\cite{acndata} in Figure~\ref{fig:demands}. Additionally, in Figure~\ref{fig:home-trips}, we present the distributions of home arrival time, home departure time, and the available EV charging time window, calculated as the difference between home departure time and home arrival time, utilizing data from the NHTS government dataset~\cite{NHTS}. From Figure~\ref{fig:home-trips}, it is evident that a significant portion of residential households has an available charging time window exceeding 8 hours, thereby supporting the feasibility of scheduling environmentally friendly and financially efficient charging.

In data preprocessing of the EV charging application, we focus on the state of California to ensure alignment between the ACN-Data and CAISO. Besides null value, we also filter out the data points containing charging duration exceeding one day, as most EV can complete full charging within 5 hours, as reported by the government survey~\cite{osti}.

\subsubsection{Additional Results}
\label{s:app-res}
\begin{table}[h] 
 \caption{MSE loss of the Plain PM and the \ouralg with varied $q+1$.}
  \centering
  \resizebox{0.5\textwidth}{!}{
    \setlength{\tabcolsep}{12pt}
  \begin{tabular}{cccc}
    \toprule
    & & \multicolumn{2}{c}{\textbf{MSE loss}} \\
    \cline{3-4}
     & $q+1$ & {Similar Agents} & {Different Agents} \\
    \hline
    \multirow{3}{*}{\ouralg} 
    & 20 & 6.63 & 6.52 \\
     & 30 & 6.57 &  6.47\\
      & 40 & 6.55 &  6.48\\
      \midrule
      Plain PM & - & 6.54 & 6.45\\
    \bottomrule
  \end{tabular}
  }
  \label{tab:app2-nob}
\end{table}
In completing the results of the scheduled EV charging application in Section~\ref{sAPPII}, we report the MSE loss of each method under conditions where the distributions w.r.t. $(D_j - I_j)$ of downstream agents are ``similar'' and ``different'' in Table~\ref{tab:app2-nob}.

\subsection{Experiments on Diverse Cost Objectives of Downstream Agents}
\label{s:app-mixed-a}
We add an experiment where agents have different objective functions: \textit{Agent (A)} for data center workload scheduling, \textit{Agent (B)} for EV charging, and \textit{Agent (C)} for iPhone green charging. This setup creates a diverse pool of agents with varying objectives, all utilizing carbon emission predictions from the upstream public model. The objectives for Agent (A) and (B) are defined by Eq.~(\ref{eq:carbon}) and Eq.~(\ref{eq:cwp}) in the main text, respectively. Note that for the EV charging application in this experiment, the public model only predicts carbon emissions $E_t^C$ rather than $E_t$. For iPhone green charging, the objective is to minimize carbon emissions by optimizing the charging schedule, formulated as $\min_{X_{o}} \sum_t \mu_{o, t} x_{o,t} \cdot E^C_{t}$, where $\mu_{o,t}$ represents the electricity charged for the $o$-th iPhone at time $t$, $x_{o, t}$ is a binary variable ($x_{o, t} \in \{0, 1\}$) indicating whether charging occurs at time $t$, and $X_o=[x_{o,1},\cdots, x_{o,T}]$, denoting the charging schedule for the $o$-th iPhone.

In the implementation, we set $\lambda$ to $2$ for the objective of Agent (A) as indicated by Eq.~(\ref{eq:carbon}). The dataset is split into training and testing sets with a ratio of $67\%$ to $33\%$. We set the initial learning rate to $0.05$ for training the Plain Public Models, and $0.1$ for training \ouralg, with a step size of 50 and a decay rate of $0.1$. The batch size is set to $128$ for training both models. In this experiment, we use the transformer with the same architecture described in Section~\ref{sAPPII}. For the three downstream agents with diverse objectives, we set the sequence length to $12$ when predicting the next time steps of carbon emissions. In the cases of EV charging and iPhone green charging, the length of the available time frame is set to $12$. For the data center application in Agent (A), which only requires the immediate next time step of carbon emission prediction, we average the predicted next 12 time steps of carbon emissions from the upstream public model. For Agent (B) and Agent (C), which need predictions for the next 12 time steps of carbon emissions, we use the predicted values directly.
 
We present the results of using different objectives across downstream agents in Table~\ref{tab:mixed-agents}. The results indicate that even when downstream agents have distinct objective functions, our proposed \ouralg still reduces the variance in their performance distribution. This leads to a fairer solution compared to the Plain PM, which only minimizes carbon prediction error without considering the decision-making costs of diverse downstream agents.

\begin{table*}[h]
    \setlength\intextsep{10pt}
    \setlength\lineskip{10pt}
    \captionsetup{width=0.9\linewidth}
     \caption{Statistics of the test results using different cost objectives for downstream agents. 
     }
    \centering
    \resizebox{0.8\textwidth}{!}{
    \setlength{\tabcolsep}{16pt}
    \begin{tabular}{cccccc}
    \toprule
 
   Method &  $q+1$ &  \textbf{Variance} & \textbf{ Mean} & $\bm{C_{95} - C_{5}}$ & MSE \\
    \toprule
     \multirow{2}{*}{\textbf{\ouralg}} & 
     $1$ & 18.14 & 7.06 & 9.28  & 9.66\\
     & $1.1$ & \bf 15.72 & \bf 6.14 & \bf 8.39 & 9.67\\
   
    \midrule
    Plain PM & -  & 18.89 & 7.24 & 9.45 & \bf 7.20\\
    
     \bottomrule
    \end{tabular}
    }
    \label{tab:mixed-agents}

\end{table*}

\subsection{Combined Objective: Explicitly Incorporating $\mathcal{L}_f$}
\label{app:withBeta}
We present a combined objective here to complement the Equitable Objective proposed in the main text to provide a more nuanced control over equity/fairness versus model accuracy. The combined objective shown in Eq.~(\ref{eq:obj}) incorporates the loss of public model $\mathcal{J}_f$ into the original Equitable Objective,
\begin{align}
\begin{split}
     &\min_\theta \quad (1-\beta) \mathcal{J}_{EQ}^q  + \beta \mathcal{J}_{f}, \quad \text{with}\\
     \mathcal{J}_{EQ}^q &= \sum_{m=1}^M \mathbb{E}^{q+1} \left[\text{cost}_m (\hat a_m, \xi_m, y) - \text{cost}_m ( a_m, \xi_m, y)\right] \\
     \mathcal{J}_{f}  & =\mathbb{E}[\|y-\hat{y}\|^2],
\label{eq:obj}
\end{split}
\end{align}
where $\beta$ controls the weighting of each component. We then approximate the expectation in Eq.~(\ref{eq:obj}) with the empirical loss as shown in the Eq.~(\ref{eq:obj-mid}).
\begin{align}\label{eq:obj-mid}
\begin{split}
   & \min_\theta \quad  (1-\beta) \mathcal{L}_{EQ}^q  + \beta \mathcal{L}_{f}, \quad \text{with} \\
    \mathcal{L}_{EQ}^q = &\sum_{m=1}^M \biggl\{\Bigr[\frac{1}{N_m} \sum_{i=1}^{N_m} \Bigr(\text{cost}_m (\hat a_{m,i}, \xi_{m, i}, y_{m,i}) -  \text{cost}_m ( a_{m,i}, \xi_{m, i}, y_{m,i})\Bigr)\Bigr]^{q+1}\biggl\}
      \\
      \mathcal{L}_{f} = &\sum_{m=1}^M \frac{1}{N_m} \sum_{i=1}^{N_m}\|y_{m,i}-\hat{y}_{m,i}\|^2
\end{split}
\end{align}
Likewise, if the cost functions are differentiable, the gradient of the combined objective is calculated as
\begin{align*}
\begin{split}
 \nabla_\theta ((1-\beta) \mathcal{L}_{EQ}^q  + \beta \mathcal{L}_{f}) =&  (1-\beta) \sum_{m=1}^M \sum_{i=1}^{N_m} \nabla_{C_{m, i}} \mathcal{L}_{EQ}^q \nabla_{\hat a_{m, i}} C_{m, i} \nabla_{\hat y_i} \hat a_{m, i} \nabla_{\theta} \hat{y}_{m,i}  \\
&+ \beta \sum_{m=1}^M \sum_{i=1}^{N_m} \frac{2}{N_m}(\hat{y}_{m,i}-y_{m,i})\nabla_\theta \hat{y}_{m,i}, 
\end{split}
\end{align*}

where $C_{m, i} =  \text{cost}_m (\hat a_{m,i}, \xi_{m, i}, y_{m,i}) -\text{cost}_m ( a_{m,i}, \xi_{m, i}, y_{m,i}).$

If the cost functions are non-differentiable, similar as \eqref{eq:grad-nob-empirical}, given a training dataset with $K$ batches and a batch size $B_m$, the gradient can be calculated as
\begin{align}
\begin{split}
     \nabla_\theta \mathcal{L}^q(\theta) \!= \!  \frac{1}{K}\sum_{k=1}^K\!\Bigr\{\biggl[\sum_{i=1}^{B_m}\!\! \sum_{m=1}^{M} \!\!\nabla_\theta\!\log \sigma_\theta (\hat y_{m,k,i}| x_{m,k,i})\biggl] 
   \cdot\biggl[   \sum_{m=1}^{M}  \Big[(1-\beta)\Bigr(\frac{1}{B_m} \sum_{i=1}^{B_m} C_{m, k,i}\Bigr)^{q+1}+\beta \sum_{i=1}^{B_m} \frac{1}{B_m}  \mathcal{L}_{f,m, i} \Big]
  \biggl] \Bigr\}.
\end{split}
\end{align}

It is not straightforward to prove that a larger $q$ would lead to a more uniform cost regret distribution by the combined objective. The challenge arises because $\theta'$ that minimizes $\mathcal{L}_{EQ}^{q}$ may not align with $\theta^*$ that optimizes the combined loss of $\mathcal{L}_{EQ}^q$ and $\mathcal{L}_f$. Nevertheless, we highlight that the combined objective provides a way to allow us to balance between fairness of downstream agents and upstream public model accuracy, achieved by adjusting the value of $\beta$. 

\begin{figure*}[t]
    \centering
\includegraphics[width=.9\textwidth]{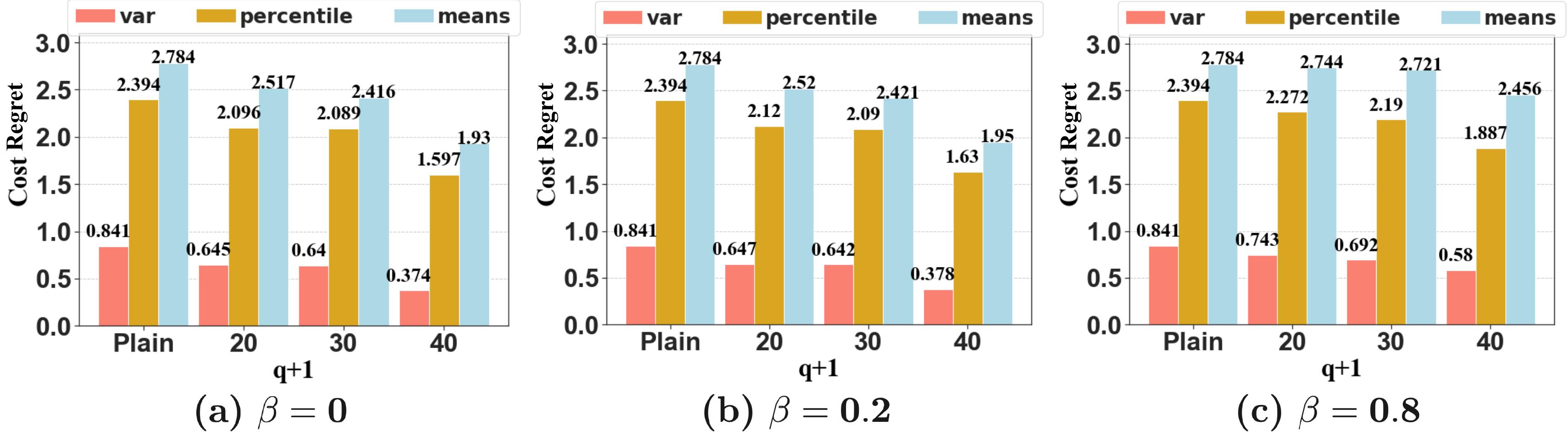}
   \caption{Statistics of test results when $\beta=[0, 0.2, 0.8]$.  Note the \ouralg here refers to the combined objective, Eq.~\eqref{eq:obj}. We can observe that the \ouralg has achieved more uniform distributions among agents compared to the Plain PM, according to the variance and percentile difference measures. }
    \label{fig:app2}
\end{figure*}

\begin{table}[h]
 \caption{MSE loss of the Plain PM and the \ouralg with $q+1$ as $40$. Note the \ouralg here refers to the combined objective, Eq.~(\ref{eq:obj}). As $\beta$ increases, the MSE loss of \ouralg decreases.}
  \centering
   \resizebox{0.35\textwidth}{!}{
  \begin{tabular}{ccc}
    \toprule
     & & \textbf{MSE loss} \\
    \hline
    \multirow{3}{*}{\textbf{\ouralg}} 
    & $\beta=0$  & 6.48\\
     & $\beta=0.2$ & 6.48 \\
      & $\beta=0.8$ & 6.46  \\
      \midrule
      Plain & - & 6.45\\
    \bottomrule
  \end{tabular}
  }
  \label{tab:app2}
\end{table}

\subsubsection{Empirical Results for the Combined Objective}
We perform empirical investigations under the same setup outlined in Section~\ref{sAPPII} to examine whether the proposed combined objective in Eq.~(\ref{eq:obj-mid}) could lead to a more equitable performance distribution among agents in the EV Charging Scheduling case study. Various $\beta$ and $q$ values are considered. Note the \ouralg mentioned in the following results refers to the public model trained using the combined objective in Eq.~(\ref{eq:obj-mid}).

\paragraph{Results}  Figure~\ref{fig:app2} reports the evaluation results between the Plain PM and \ouralg with different $q$ and $\beta$ values. It can be observed the variance and $C_{95}-C_{5}$ achieved by the \ouralg consistently remain lower than using the Plain PM. From Figure~\ref{fig:app2}, we observe both the variance and $C_{95}-C_{5}$ of cost regret distributions across agents decreases as the value of $q$ increases, implying the performance distribution becomes more uniform. Notably, setting $\beta=0$ makes the \ouralg focus on optimizing the Equitable Objective $\mathcal{L}^q_{EQ}$ exclusively, resulting in the most uniform distribution compared to $\beta=0.2$ and $\beta=0.8$. In contrast, the MSE loss, $\mathcal{L}_f$, decreases as $\beta$ increases, as shown in Table~\ref{tab:app2}.

\end{document}